\DeclareMathSymbol{:}{\mathord}{operators}{"3A}
\newtheorem{definition}{Definition}
\newtheorem{lemma}{Lemma}
\newtheorem{theorem}{Theorem}
\newcommand{\theMC}{(MC)$^\text{2}$MKP}
\newcommand{\Natural}{\mathbb{N}}
\newcommand{\Real}{\mathbb{R}_{\ge 0}}
\newcommand{\SetRes}{\mathcal{R}}
\newcommand{\SetCosts}{\mathcal{C}}
\newcommand{\SetLower}{\mathcal{L}}
\newcommand{\SetUpper}{\mathcal{U}}
\newcommand{\NumRes}{n}
\newcommand{\NumTasks}{T}
\newcommand{\Cost}[1]{C_{{#1}}}
\newcommand{\Mar}[1]{M_{{#1}}}
\newcommand{\Upper}[1]{U_{#1}}
\newcommand{\Lower}[1]{L_{#1}}
\newcommand{\Mapping}{X}
\newcommand{\Map}[1]{x_{{#1}}}
\newcommand{\TotalCost}{\Sigma C}
\DeclareMathOperator*{\argmin}{arg\,min}
\begin{document}
\title{Scheduling Algorithms for Federated Learning with Minimal Energy Consumption}

\author{La\'ercio~Lima~Pilla\\Univ. Bordeaux, CNRS, Bordeaux INP, Inria, LaBRI, UMR~5800, F-33400 Talence, France.\protect\\
E-mail: laercio.lima-pilla@labri.fr \IEEEcompsocitemizethanks{\IEEEcompsocthanksitem This work has been submitted to the IEEE for possible publication. Copyright may be transferred without notice, after which this version may no longer be accessible.}}

\maketitle

\begin{abstract}
Federated Learning (FL) has opened the opportunity for collaboratively training machine learning models on heterogeneous mobile or Edge devices while keeping local data private.
With an increase in its adoption, a growing concern is related to its economic and environmental cost (as is also the case for other machine learning techniques).
Unfortunately, little work has been done to optimize its energy consumption or emissions of carbon dioxide or equivalents, as energy minimization is usually left as a secondary objective.
In this paper, we investigate the problem of minimizing the energy consumption of FL training on heterogeneous devices by controlling the workload distribution.
We model this as the Minimal Cost FL Schedule problem, a total cost minimization problem with identical, independent, and atomic tasks that have to be assigned to heterogeneous resources with arbitrary cost functions.
We propose a pseudo-polynomial optimal solution to the problem based on the previously unexplored Multiple-Choice Minimum-Cost Maximal Knapsack Packing Problem.
We also provide four algorithms for scenarios where cost functions are monotonically increasing and follow the same behavior.
These solutions are likewise applicable on the minimization of other kinds of costs, and in other one-dimensional data partition problems. 

Keywords: scheduling, optimization, machine learning, energy conservation, algorithms, parallel processing, knapsack problems.
\end{abstract}

\section{Introduction}\label{sec:intro}

Federated Learning~(FL) is a distributed machine learning technique used for training a shared model collaboratively while not sharing local data~\cite{mcmahan2017, bonawitz2019, lim2020, gu2021}.
This technique reduces privacy and security risks while also improving communication efficiency~\cite{mcmahan2017}.
These features make FL attractive for applications from next-word prediction~\cite{hard2018} and on-device item ranking~\cite{bonawitz2019}, to cyberattack detection~\cite{lim2020} and graph classification~\cite{xie2021}.
Due to its data privacy design, FL has also received significant attention in medical applications. It has been used in brain tumor segmentation~\cite{sheller2019}, tumor classification~\cite{brum2021}, and chest X-ray diagnosis for COVID-19~\cite{park2021}.

In its most standard form, FL is based on the idea of a central server that coordinates the work of participating devices (mostly heterogeneous mobile or edge devices, but also local computers or Cloud instances in some application cases).
The server starts a training round by sending an initial model to some of the devices.
The devices train the model with their own local data, and send the updated model weights back to the server.
The latter then aggregates all updates and combines their model weights (e.g., by averaging their values) in order to start the next training round.
This process is repeated until a given deadline is met, a fixed number of training rounds is achieved, or until the model converges to a target accuracy.


The accuracy of \textbf{machine learning models (FL included)} has seen improvements at the cost of larger models and \textbf{exponentially-growing computational demands}.
When combined with an increase in usage, this is leading to an increased attention to machine learning’s economic and environmental costs~\cite{schwartz2020,henderson2020}.
An initial study~\cite{qiu2021} indicates that \textbf{FL’s energy consumption can be one order of magnitude greater than an equivalent centralized model, while its carbon footprint may even be two orders of magnitude larger} (mostly due to the energy sources available for people participating in training in different locations around the globe).
Energy is also of concern for FL due to the limited batteries of mobile devices.


All of the aforementioned concerns make FL a prime target for energy consumption optimizations.
Yet, little work has been done on the subject so far~\cite{kim2021}.
Energy consumption is often seen as a secondary or tertiary objective after accuracy and execution time~\cite{xu2019,li2019,tran2019,nguyen2021}.
Additionally, the energy consumed during training is usually modeled in a simple manner to reduce the complexity of the optimization problems and ease the use of heuristics~\cite{anh2019,kang2019,li2021,nguyen2021,tran2019,zaw2021,yang2021}.


In this context, \textbf{we present optimal scheduling algorithms to minimize the energy consumption of Federated Learning}.
Our algorithms define, for a target volume of training data, how much local data each device should use (while also respecting their own lower and upper data limits).
We show that this scheduling problem, in its general definition (without any specific assumptions of the devices’ energy behavior), is equivalent to a \textbf{new knapsack problem that can be solved in pseudo-polynomial time}.
We also present \textbf{four algorithms with lower complexity for specific energy behavior scenarios} where costs increase monotonically.
All of these algorithms have been designed for energy conservation, but they can be directly applied to minimize the carbon footprint, monetary cost, or any other cost function, weighted or not.
We can summarize our main contributions as follows:

\begin{itemize}
\item[{$\star$}] We provide a formulation of the minimal cost FL schedule problem with lower limits, upper limits, and arbitrary cost functions per device;
\item[{$\star$}] We propose an optimal pseudo-polynomial solution based on the new Multiple-Choice Minimum-Cost Maximal Knapsack Packing Problem; and
\item[{$\star$}] We present four optimal scheduling algorithms for scenarios with monotonically increasing cost functions with specific marginal cost behaviors, with or without upper limits.
\end{itemize}

The remaining of this paper is organized as follows:
Section~\ref{sec:rw} introduces related work on FL, energy, and scheduling.
Section~\ref{sec:problem} provides a formulation of our scheduling problem.
Section~\ref{sec:general} presents an optimal solution to this problem based on a new knapsack problem and solution.
Section~\ref{sec:opts} shows algorithms optimized for specific scenarios.
Section~\ref{sec:conclusion} presents concluding remarks.

\section{Related Work}\label{sec:rw}

Federated Learning is the subject of a large body of work in both research and development. 
We point our readers to the surveys done by Lim et al.~\cite{lim2020}, Zhang et al.~\cite{zhang2021}, and Gu et al.~\cite{gu2021} for a comprehensive view on the subject and associated topics. 
Our focus here is dedicated to works that deal with topics related to
workload distribution,
energy optimization for FL,
and energy profiling and modeling.

\subsection{Workload Distribution}

When FL was originally proposed by McMahan et al.~\cite{mcmahan2017}, no special attention was paid to the training time or to the energy consumption of the mobile devices.
It is only natural then that no mechanism to control the workload distribution (i.e., how much local data each device should use) was proposed at the time.

Wang, Wei, and Zhou~\cite{wang2020} propose Fed-LBAP and Fed-MinAvg as mechanisms to control the workload distribution in order to minimize the computation time and accuracy loss when training FL models.
They identified the computation time as the main bottleneck for FL.
This argument has also been emphasized by others that postulate a decrease on the impact of communication during training with the rise of 5G technologies~\cite{gu2021,li2021}.
Wang, Yang, and Zhou~\cite{wang2021} have later presented an algorithm named MinCost for this same optimization problem.
We have previously proposed OLAR~\cite{pilla2021} as an optimal greedy solution for minimizing the computation time of FL.
Nonetheless, all of these algorithms help minimize the maximum cost (execution time, in this case), while our energy consumption problem requires the minimization of the total cost (its sum).

Outside FL, Khaleghzadeh, Manumachu, and Lastovetsky~\cite{khaleghzadeh2018} introduce a branch-and-bound solution to minimize the computation time when the cost function of each resource (how much time it takes to process a given amount of data) is arbitrary --- in other words, they do not follow any specific behavior like being monotonically increasing.
Their algorithm has its worst-case complexity in $O(\NumRes^3\NumTasks^3)$ for $\NumRes$ heterogeneous resources and a workload of size $\NumTasks$.
Later, Khaleghzadeh et al.~\cite{khaleghzadeh2021} have adapted this algorithm to optimize both execution time and energy consumption (i.e., find the Pareto front). 
Their new solution has its time complexity in~$O(\NumRes^3\NumTasks^3\log(\NumRes\NumTasks))$.
Meanwhile, our focus lies only on energy consumption.
As we will show later, our solution for arbitrary cost functions has a time complexity in~$O(\NumTasks^2\NumRes)$, while the solutions for specific scenarios vary between $O(\NumTasks\NumRes^2)$ and $\Theta(\NumRes)$. 

A constraint in our workload distribution comes in the form of lower and upper limits on the amount of work to be given to each device.
These limits play an important role in our problem.
For FL, workload is translated into the number of data samples (mini-batches) to be used for training during a round.
Lower limits can be used to enforce device participation at minimum levels (providing different data sources), which could also help with fairness~\cite{huang2021}.
They also help reduce the time a scheduling algorithm takes to achieve its decisions~\cite{pilla2021}.
Meanwhile, upper limits avoid data over-representation from better-performing or more energy-efficient devices~\cite{lim2020}.
They can be naturally found by considering the amount of data available in a device~\cite{anh2019}.
Finally, they can also be used to set contracts with participants and help incentivise their participation in training~\cite{kang2019}.

\subsection{Energy Optimization in FL}

Instead of controlling the workload distribution, most works that consider the energy consumption of FL devices act on the clock frequency of their processors~\cite{xu2019,li2019,tran2019,nguyen2021,yang2021,khan2020}, among other options.
For instance, Xu, Li, and Zou~\cite{xu2019} reduce the clock frequency of participating devices while respecting a set training round deadline in order to conserve energy. 
The same kind of strategy is employed by SmartPC~\cite{li2019}.
Tran et al.~\cite{tran2019} optimize for both time and energy by controlling the clock frequency and the fraction of communication time allocated to the devices.
Meanwhile, Khan et al.~\cite{khan2020} let Edge devices choose their own clock frequencies by using rewards as an incentive mechanism in a Stackelberg game.

The exploration of multiple optimization options concurrently is not uncommon in the literature. 
Nguyen et al.~\cite{nguyen2021} optimize both time and energy consumption by setting the transmission time, bandwidth allocation, and clock frequency of devices.
Yang et al.~\cite{yang2021} consider in addition the transmission power and learning accuracy when minimizing the total energy consumption of FL over wireless networks.
In contrast, we prefer to act only on the workload distribution as this keeps the optimization at the software level (i.e., no changes take effect in the devices’ hardware).
Still, acting on additional control points could be explored in future work.

There are also some works that do not use the clock frequency of the devices in their decisions, indicating other venues for optimization.
Luo et al.~\cite{luo2021} optimize time and energy together by choosing which devices should participate in a training round and by setting their local number of epochs to compute.
Li et al.~\cite{li2021} control the communication compression in order to reduce the energy consumption when computing and communicating the FL model.
Zaw et al.~\cite{zaw2021} set total energy consumption as a constraint when minimizing the training time of FL.

Finally, Anh et al.~\cite{anh2019} are the only ones to act on something similar to the workload distribution when optimizing the energy consumption of FL devices.
They use Deep Reinforcement Learning to decide, under uncertainty, how many data units and energy units a device should use for training in order to minimize the training time and energy consumption.
Still, instead of having a set workload to distribute, this scheme tries to balance using as much data in each device with using as little energy as possible.

\subsection{Energy Modeling and Profiling}

Many works in the FL community model the energy consumption or execution time of training with each data unit as constants~\cite{anh2019,kang2019,li2021,nguyen2021,tran2019,zaw2021,yang2021}.
This has the benefit of simplifying the problem at the cost of generality.
On the other hand, Khaleghzadeh, Manumachu, and Lastovetsky~\cite{khaleghzadeh2018} have shown that the actual performance of parallel applications running on heterogeneous resources may vary with workload size (i.e., costs are not constant).
Khaleghzadeh et al.~\cite{khaleghzadeh2021} have later shown that this is also the case for energy.
We employ a general model with an optimal solution in Sections~\ref{sec:problem} and~\ref{sec:general}, and we move to more constrained scenarios in Section~\ref{sec:opts}.

The energy consumed by a device during training is strongly dependent on its resources and the machine learning model being used.
For instance, Lane et al.~\cite{lane2015} have shown that the energy consumption for a single inference may vary between one and three orders of magnitude depending on the device and ML model.
Qiu et al.~\cite{qiu2021} illustrate differences in energy consumption during training dependending on the model, device, and FL strategy.
Additionally, besides hardware heterogeneity, the models being trained together may differ among devices, as is the case for personalized FL~\cite{jang2022}.

Obtaining accurate energy consumption information for scheduling on mobile devices can be done by monitoring the devices' utilization.
For instance, Walker et al.~\cite{walker2017} are able to model power consumption for multiple applications with low error~($<3.5\%$ on average) using performance monitoring counters available on mobile devices.
Kim and Wu~\cite{kim2021} use external power meters to gather power information from different devices at different clock frequencies and states (busy or idle).
They then organize devices at different power and performance levels, and use this information in combination with processor utilization information (from Unix commands) to model energy.

In the specific case of FL, I-Prof~\cite{damaskinos2020} has been proposed for profiling FL devices in order to predict the largest mini-batch size they can handle while respecting energy consumption and execution time limits. 
We believe I-Prof could be adapted to gather the energy consumption information required for our scheduling needs.
Another option lies in Flower~\cite{beutel2022}, a FL framework that already has the capacity to measure energy consumption on different devices, and that has been shown to be extensible~\cite{zhao2022}.

Finally, for a more general review of techniques to estimate the energy consumption of ML models, we point our readers to the survey by Garcia-Martin et al.~\cite{garcia2019}.

\section{Definition of the Scheduling Problem}\label{sec:problem}

We can combine the previous ideas related to workload distribution and energy optimization into our scheduling problem.
The problem of minimizing the energy consumption of heterogeneous devices during one round of Federated Learning can be defined in a similar fashion to the problem of minimizing the duration of a round~\cite{pilla2021}, which, by itself, resembles the problem of scheduling identical and independent tasks on heterogeneous resources~\cite[Chapter 6.1]{casanova2008parallel}.
From now on, we will refer to our mobile or Edge devices as \textbf{resources}, to the data units or mini-batches to schedule as \textbf{tasks}, and to the energy consumed by a device as its \textbf{cost}.
Table~\ref{tab:summary} summarizes the main notation used throughout this text. 

Consider a situation with $\NumRes$ heterogeneous resources organized in a set $\SetRes = \{1,2,\dots,\NumRes\}$.
Together, these resources must train their machine learning models with a workload of total size $\NumTasks \in \Natural$.
This workload is composed of identical, independent, and atomic tasks.

All resources have their own upper and lower limits on the number of tasks that they can use for training during the round.
We represent these upper and lower limits as $\SetUpper = \{\Upper{1},\dots, \Upper{\NumRes}\}$ and $\SetLower = \{\Lower{1},\dots,\Lower{\NumRes}\}$, respectively ($\Upper{i}$ and $\Lower{i} \in \Natural,~\forall i \in \SetRes$).
A resource $i \in \SetRes$ has its own local cost function $\Cost{i}:[\Lower{i},\Upper{i}]\rightarrow\Real$ that represents the energy that it consumes during training with a given number of tasks.
We use $\SetCosts = \{\Cost{1},\dots,\Cost{\NumRes}\}$ to represent the set of cost functions.
Finally, consider the schedule $\Mapping = \{\Map{1},\dots,\Map{\NumRes}\}$ that assigns $\Map{i} \in [\Lower{i},\Upper{i}]$ tasks to each resource~$i \in \SetRes$.

\begin{definition}[Minimal Cost FL Schedule]\label{def:problem}
    Given an instance $(\SetRes,\NumTasks,\SetUpper,\SetLower,\SetCosts)$, the goal is to find an optimal schedule $\Mapping^*$ that minimizes the total cost $\TotalCost$, i.e.:

    \begin{subequations}
        \begin{align}
            \text{minimize}_{\Mapping} & ~\TotalCost \coloneqq \sum\limits_{i \in \SetRes} \Cost{i}(\Map{i}) \label{eq:totalcost} \\
            \text{subject to} & ~\sum\limits_{i \in \SetRes} \Map{i} = \NumTasks, \label{eq:allmap} \\
            & ~\Lower{i} \le \Map{i} \le \Upper{i},~\forall i \in \SetRes \label{eq:all-respect}
        \end{align}
    \end{subequations}

\end{definition}

Throughout this text, we focus on non-trivial, valid problem instances.
In general, this means that no resource has an upper limit that is smaller than its lower limit, and that the number of tasks to assign is greater than the sum of lower limits and smaller than the sum of upper limits.
Without loss of generality, we also assume that $\NumTasks > \NumRes$.

\subsection{Problem Example}

Consider the situation where $\SetRes = \{1,2,3\}$, $\SetUpper = \{6,6,5\}$, $\SetLower = \{1,0,0\}$, and $\SetCosts = \{\{1:2, 2:3.5, 3:5.5, 4:8, 5:10, 6:12\},$ $\{0:0, 1:1.5, 2:2.5, 3:4, 4:7, 5:9, 6:11\}$, $\{0:0, 1:3, 2:4, 3:5, 4:6,$ $5:7\}\}$.
Figs.~\ref{fig:1} and~\ref{fig:2} illustrate this as Gantt charts, where each line represents a resource and the numbers in blue provide the local cost of assigning a given number of tasks to each resource.
The representation as Gantt charts is possible here because the costs are monotonically increasing, but this is not a constraint to our scheduling problem.

\begin{table}[t]
\centering
    \caption{Summary of main notation and symbols (by order of appearance).}
    \label{tab:summary}
    \begin{tabular}{cl}
\textbf{Symbol} & \multicolumn{1}{c}{\textbf{Meaning}} \\
\toprule
$\NumRes$ & Number of resources or disjoint classes of items. \\
$\SetRes$ & Set of resources. \\
$\NumTasks$ & Size of the workload or knapsack capacity. \\
$\Upper{i}$ & Upper limit of tasks of resource $i$. \\
$\Lower{i}$ & Lower limit of tasks of resource $i$. \\
$\Cost{i}(j)$ & Cost of assigning $j$ tasks to $i$. \\
$\Map{i}$ & Number of tasks assigned to resource $i$. \\
$\Mapping$ & Schedule assigning tasks to all resources. \\
$\Mapping^*$ & Optimal schedule. \\
$\TotalCost$ & Total cost of a schedule. \\
\midrule
$N_i$ & Disjoint class of items. \\
$c_{ij}$ & Cost of item $j$ from class $N_i$. \\
$w_{ij}$ & Weight of item $j$ from class $N_i$. \\
$x_{ij}$ & Binary variable for choosing item $j$ from class $N_i$. \\
$y$ & Occupied capacity of the knapsack. \\
$\mathcal{Z}_r$ & Partial solution value with the first $r$ item classes. \\
$\tau$ & Restricted knapsack capacity. \\
$\mathcal{X}(\NumTasks)$ & Optimal solution for \theMC{} with capacity $\NumTasks$. \\
$K$ & Minimal costs table for dynamic programming. \\
$I$ & Items table for dynamic programming. \\
$\NumTasks^*$ & Capacity used in the optimal solution of \theMC{}. \\
\midrule
$\Mar{i}(j)$ & Marginal cost of assigning the $j^{th}$ task to resource $i$. \\
$\Map{i}^t$ & Partial assignment of tasks to resource $i$.\\
$\Mapping^t$ & Schedule assigning $t$ tasks among the resources.\\
$\TotalCost^t$ & Total cost of a schedule with $t$ tasks. \\
$\SetRes^{lim}$ & Subset of resources with upper limits. \\
$\SetRes^{unl}$ & Subset of resources without upper limits. \\
$\NumRes^{lim}$ & Number of resources with upper limits. \\
$\gamma$ & Translation from disjoint classes to resources. \\
\end{tabular}
\end{table}

\begin{figure}[!htb]
    \centering 
    \includegraphics[width=0.5\columnwidth]{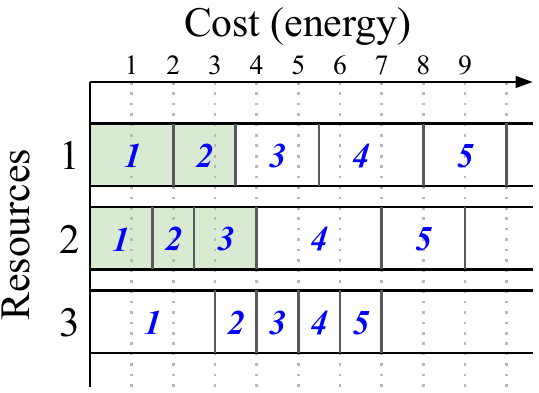}
    \caption{Gantt chart with the optimal schedule for $\NumTasks=5$ shaded in green.}
    \label{fig:1}
\end{figure}

\begin{figure}[!htb]
    \centering 
    \includegraphics[width=0.5\columnwidth]{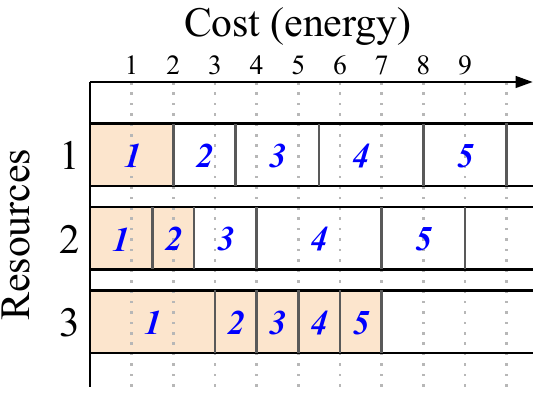}
    \caption{Gantt chart with the optimal schedule for $\NumTasks=8$ shaded in orange.}
    \label{fig:2}
\end{figure}

Fig.~\ref{fig:1} shows the optimal schedule when $\NumTasks=5$, i.e., $\Mapping^* = \{2,3,0\}$. 
It provides a total cost $\TotalCost=7.5$.
Although assigning all tasks to resource~$3$ would provide a smaller total cost, it would not respect $\Lower{1}$.
Meanwhile, Fig.~\ref{fig:2} illustrates the optimal schedule for $\NumTasks=8$, i.e., $\Mapping^* = \{1,2,5\}$ with $\TotalCost=11.5$.
This assignment reaches both $\Lower{1}$ and $\Upper{3}$.
We can also notice that the solution for the second problem does not contain the solution of the first smaller problem.
This provides us with the important insight that simple greedy algorithms will not find optimal schedules.

\section{Optimal Solution as a Knapsack Problem}\label{sec:general}


The Minimal Cost FL Schedule problem, as shown in Definition~\ref{def:problem}, is structured as a minimization problem where the solution has to include a valid item (schedule) from each class of items (possible schedules for each resource).
These properties can be directly mapped to a previously unexplored knapsack problem, which we will call the \textbf{Multiple-Choice Minimum-Cost Maximal Knapsack Packing Problem} (\textbf{\theMC}).
Throughout this section, we will explain this knapsack problem and show how it generalizes our scheduling problem.
We follow this discussion with the presentation of its recurrence function and an optimal dynamic programming~(DP) solution.

\subsection{The Multiple-Choice Minimum-Cost Maximal Knapsack Packing Problem}\label{subsec:mcmkp}


\theMC{} inherits characteristics from other knapsack problems.
It shows some similarities to the Multiple-Choice Knapsack Problem~(MCKP)~\cite{kellerer2004}, a maximization knapsack problem where the set of items is partitioned into classes.
Although MCKP in minimization form can be easily turned into an equivalent maximization problem, its optimal solution may not use the full capacity of the knapsack.
Meanwhile, the Minimum-Cost Maximal Knapsack Packing Problem~(MCMKP)~\cite{furini2017} is a less studied knapsack problem variation~\cite{cacchiani2022} where the objective is to fully occupy the knapsack while also trying to minimize its total cost.
Nonetheless, it lacks the classes present in MCKP.
In a sense, \theMC{} acts as a generalization of MCMKP, just as MCKP generalizes the ordinary 0-1 Knapsack Problem~\cite{kellerer2004}.

For the construction of \theMC, consider a set of $\NumRes$ disjoint classes $\mathcal{N} = \{N_1,\dots,N_{\NumRes}\}$ of items to be packed into a knapsack of capacity~$\NumTasks$.
Each item $j \in N_i$ has a cost $c_{ij} \in \Real$ and a weight $w_{ij} \in \Natural$.
The problem is to choose exactly one item from each class such that the cost sum is minimized while using the knapsack’s capacity to its maximum\footnote{Knapsack problem notations usually employ $C$ to represent the capacity of the knapsack and $p_{ij}$ to represent profits (instead of costs). We chose to use the notation $\NumTasks$ and $c_{ij}$ in order to preserve some similarity with our scheduling problem.}.
We use binary variables $x_{ij}$ that take on value~$1$ if and only if $j$ is chosen in class $N_i$, and an integer variable $y$ that represents the occupancy of the knapsack's capacity.
With this components in mind, we can define the \theMC{} as follows:

\begin{definition}[Multiple-Choice Minimum-Cost Maximum Knapsack Packing Problem]\label{def:mcmcmkp}
    Given a knapsack instance $(\mathcal{N}, c, w, \NumTasks)$, the goal of \theMC{} is to find a maximal knapsack packing that minimizes the cost of selected items, i.e.:

\begin{subequations}
    \begin{align}
        \text{minimize}_{x,y}&~~~\sum\limits_{i=1}^{\NumRes}\sum\limits_{j \in N_i} c_{ij}x_{ij} -y\sum\limits_{i=1}^{\NumRes}\sum\limits_{j \in N_i} c_{ij},\label{ks:min}\\ 
        \text{subject to}&~~~\sum\limits_{i=1}^{\NumRes}\sum\limits_{j \in N_i} w_{ij}x_{ij} \leq \NumTasks, \label{ks:capacity}\\
        &~~~\sum\limits_{i=1}^{\NumRes}\sum\limits_{j \in N_i} w_{ij}x_{ij} \geq y,\label{ks:occupancy}\\
        &~~~\sum\limits_{j \in N_i} x_{ij} = 1,~~i=1,\dots,\NumRes, \label{ks:one_item} \\
        &~~~x_{ij} \in \{0,1\},~~i=1,\dots,\NumRes,~~j\in N_i, \\
        &~~~y \in [0,\NumTasks]
    \end{align}
\end{subequations}
\end{definition}

The formulation in Definition~\ref{def:mcmcmkp} is similar to the formulation of MCKP~\cite{kellerer2004}.
For instance, rule~\eqref{ks:capacity} constrains the solutions to the ones that fit into the knapsack, and rule~\eqref{ks:one_item} says that a single item from each set must be included.
The outliers here are rules~\eqref{ks:min} and~\eqref{ks:occupancy} that adapt the idea of a maximal knapsack packing~\cite{furini2017}.
Given that maximizing the occupancy of the knapsack has precedence over the minimal cost, rule~\eqref{ks:min} unifies both objectives by multiplying $y$ by a negative constant of absolute value larger than any possible minimal cost.
Despite this, the actual cost of the solution relates only to the $\sum\sum c_{ij}x_{ij}$ part of rule~\eqref{ks:min}.


\subsubsection{Transformation from Scheduling to Knapsack Problem}

Before going through the solution to this problem, we would like to emphasize how \theMC{} generalizes the Minimal Cost FL Schedule problem presented in Section~\ref{sec:problem}.

As a first step, we can focus on the transformation between problem instances $(\mathcal{N}, c, w, \NumTasks)$ and $(\SetRes, \NumTasks,\SetUpper,\SetLower,\SetCosts)$.
This is based on the idea that a disjoint class $N_i$ can be composed by all possible schedules for resource $i \in \SetRes$, i.e., $N_i = \{\Lower{i}, \Lower{i}+1, \dots, \Upper{i}\}$.
In this situation, the cost and weight of an item $j\in N_i$ are set as $c_{ij} = \Cost{i}(j)$ and $w_{ij} = j$.
The solution of \theMC{} can be transformed to its scheduling equivalent by setting $x_i = j,~\forall i \in \SetRes, x_{ij} = 1$.

Concerning the generalization, \theMC{} relaxes two different constraints from the Minimal Cost FL Schedule problem.
The first difference is that the classes in \theMC{} can contain items with any arbitrary weights, while our scheduling problem considers all solutions in a given interval with upper and lower limits.
The second difference relates to~$\NumTasks$: \theMC{} accepts packings that use less than the total capacity of the knapsack, while our scheduling problem requires assigning all tasks to resources.
These two distinctions are related: as our scheduling problem considers solution intervals for each resource, there are always solutions that assign all tasks.
Meanwhile, given the arbitrary weights of items in \theMC{}, there may be no solution that fully occupies the knapsack, leading to the second relaxation.
Nonetheless, we can count on an optimal solution to \theMC{} to also optimize the Minimal Cost FL Schedule problem.

\subsection{Dynamic Programming Solution}\label{subsec:dp}

Optimal solutions to \theMC{} can be found using a dynamic programming technique similar to the one used for MCKP~\cite{kellerer2004,dudzinski1987}.
Let $\mathcal{Z}_r(\tau)$ be an optimal solution value for a partial problem with the first $r$ item classes that fully occupies a knapsack with restricted capacity $\tau$.
Assume that $\mathcal{Z}_r(\tau)\coloneqq\infty$ if no solution exists and that $\mathcal{Z}_0(0)\coloneqq 0$.
$\mathcal{Z}_r(\tau)$ is defined in~\eqref{eq:mcopt} and its value can be recursively computed following~\eqref{eq:recursion}.

\begin{equation}
\label{eq:mcopt}
\mathcal{Z}_r(\tau) \coloneqq \min \left\{ \sum\limits_{i=1}^{r}\sum\limits_{j \in N_i} c_{ij}x_{ij}\middle\vert\
    \begin{array}{l}
    \sum\limits_{i=1}^{r}\sum\limits_{j \in N_i} w_{ij}x_{ij} = \tau,\\
    \sum\limits_{j \in N_i} x_{ij} = 1,i=1,\dots,r, \\
    x_{ij} \in \{0,1\},i=1,\dots,r,\\
    ~~~~~~~~~~~~~~~~~~~~j\in N_i 
    \end{array}
\right\}
\end{equation}

\begin{equation}
\label{eq:recursion}
\mathcal{Z}_r(\tau) = \min\limits_{j \in N_r, w_{rj}\leq \tau} 
    \left( \mathcal{Z}_{r-1}(\tau-w_{rj})+c_{rj} \right)
\end{equation}

Using the optimal solutions for partial problems found by $\mathcal{Z}$, we define $\mathcal{X}(\NumTasks)$ as the optimal solution for \theMC{} with capacity $\NumTasks$ as shown in~\eqref{eq:finalrec}.
Simply put, it takes the optimal solution for a capacity $\NumTasks$ if it exists, or it takes the optimal solution found with the highest occupancy. 
The combined use of $\mathcal{X}$ and $\mathcal{Z}$ lets us find a maximum knapsack packing that also provides the minimum cost.

\begin{equation}
    \label{eq:finalrec}
    \mathcal{X}(\NumTasks) =
    \begin{cases}
        \mathcal{Z}_\NumRes(\NumTasks) & \text{if}~ \mathcal{Z}_\NumRes(\NumTasks)\neq\infty,\\
        \mathcal{X}(\NumTasks-1) & \text{otherwise}
    \end{cases}
\end{equation}

We use the ideas behind~\eqref{eq:recursion} and~\eqref{eq:finalrec} to propose Algorithm~\ref{algo:1}, which presents a dynamic programming implementation for the optimal solution of \theMC.
Algorithm~\ref{algo:1} uses two matrices, $K$ and $I$, to store the minimal costs that are progressively computed and the items that are part of these solutions, respectively.
The algorithm first stores all possible solutions for $\mathcal{Z}_1$ (lines 7--9) and then iteratively computes the optimal solutions for increasing numbers of item classes and all knapsack capacities (lines 10--19).
By its end, it finds the highest knapsack occupancy possible and its minimum cost (lines 21--24), and it goes in the reverse order through the item classes to extract the items that belong in the optimal solution (lines 25--28).

Algorithm~\ref{algo:1} shows a space bound in $O(\NumTasks\NumRes)$ and it requires $O(\NumTasks\sum_{i=1}^\NumRes |N_i|)$ operations.
Both are equivalent to the complexity found in the DP solution for MCKP~\cite{kellerer2004}.
In the context of our scheduling problem, we can have at most $\NumTasks$ assignments possible for each resource.
This gives us a worst-case complexity in $O(\NumTasks^2\NumRes)$.

\begin{algorithm}[h]
    \caption{A DP solution to \theMC.}
    \label{algo:1}
    \begin{algorithmic}[1]
\Require Set of disjoint classes of items $\mathcal{N} = \{N_1,\dots,N_{\NumRes}\}$ with costs $c_{ij}$ and weights $w_{ij}$, $i=1,\dots,\NumRes$, $j\in N_i$. Knapsack capacity $\NumTasks$.
\Ensure Total cost $\TotalCost$, required capacity $\NumTasks^*$, and list of items in the solution $\Mapping = \{\Map{1},\dots,\Map{\NumRes}\}$.
        \For{$i=1,\dots,\NumRes$}
        \Comment{\textit{Initialization of minimal costs}}
            \For{$t=0,\dots,\NumTasks$}
            \Comment{\textit{and partial solutions matrices.}}
            \State $K[i][t] \leftarrow \infty$~;~$I[i][t] \leftarrow \varnothing$ 
            \EndFor
        \State $\Map{i} \leftarrow \varnothing$ 
        \EndFor
        \For{$j \in N_1$}
            \Comment{\textit{Only solutions for} $\mathcal{Z}_1$\textit{.}}
            \State $K[1][w_{1j}] \leftarrow c_{1j}$~;~$I[1][w_{ij}] \leftarrow j$ 
        \EndFor
        \For{$i=2,\dots,\NumRes$}
        \Comment{\textit{Computes} $\mathcal{Z}_i$ \textit{for all capacities.}}
            \For{$j\in N_i$}
            \Comment{\textit{Using all items in}~$N_i$.}
                \For{$t=w_{ij},\dots,\NumTasks$}
                \If{$K[i-1][t-w_{ij}] + c_{ij} < K[i][t]$}
                    \Statex \Comment{\textit{Best solution for} $\mathcal{Z}_i(t)$ \textit{so far.}}
                    \State $K[i][t] \leftarrow K[i-1][t-w_{ij}] + c_{ij}$
                    \State $I[i][t] \leftarrow j$
                \EndIf
                \EndFor
            \EndFor
        \EndFor
        \State $\NumTasks^* \leftarrow \NumTasks$
        \While{$K[\NumRes][\NumTasks^*] = \infty$}
        \Comment{\textit{Finds} $\NumTasks^*$.}
            \State $\NumTasks^* \leftarrow \NumTasks^* -1$
        \EndWhile
        \State $\TotalCost \leftarrow K[\NumRes][\NumTasks^*]$ 
        \Comment{\textit{Finds} $\TotalCost$.}
        \State $t \leftarrow \NumTasks^*$
        \For{$i=\NumRes,\dots,1$}
            \Comment{\textit{Finds} $\Mapping$.}
            \State $j \leftarrow I[i][t]$~;~ $\Map{i} \leftarrow j$~;~ $t \leftarrow t - w_{ij}$
        \EndFor
        \State \textbf{return} $\TotalCost$, $\NumTasks^*$, $\Mapping$
    \end{algorithmic}
\end{algorithm}

\subsubsection{Proof of optimality}

The optimality of Algorithm~\ref{algo:1} can be easily demonstrated by induction and is kept here for the sake of completeness.
We first highlight the optimality of the base case for $\mathcal{Z}_1$ in Lemma~\ref{proof:1:1}.
We then prove the induction step in Lemma~\ref{proof:1:2}.
Finally, we combine these ideas with the selection of the minimum cost solution with the highest knapsack occupancy in Theorem~\ref{proof:1:3}.

\begin{lemma}\label{proof:1:1}
    The solutions in $\mathcal{Z}_1$ are optimal.
\end{lemma}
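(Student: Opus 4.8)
The plan is to treat this as the base case of the induction underlying the correctness of Algorithm~\ref{algo:1}, arguing directly from the defining optimization in~\eqref{eq:mcopt}. Specializing that definition to $r=1$, a feasible partial solution must select exactly one item from the single class $N_1$, and that item's weight must equal the restricted capacity $\tau$ (the packing is required to be full at capacity $\tau$). Hence the feasible set for $\mathcal{Z}_1(\tau)$ is exactly $\{ j \in N_1 : w_{1j} = \tau \}$, each choice contributing cost $c_{1j}$, and there is nothing further to trade off across classes.

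Given this characterization, I would show that the optimal value is $\min\{ c_{1j} : j \in N_1,\ w_{1j} = \tau \}$ when this set is nonempty, and $\infty$ otherwise (correctly encoding infeasibility, consistent with the convention $\mathcal{Z}_r(\tau)\coloneqq\infty$). This is precisely what the recurrence~\eqref{eq:recursion} produces at $r=1$: substituting $\mathcal{Z}_0(\tau-w_{1j})$, which equals $0$ only when $w_{1j}=\tau$ and $\infty$ otherwise, collapses the minimization onto the items whose weight matches $\tau$, yielding the claimed minimum.

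Finally, I would verify that lines~7--9 of Algorithm~\ref{algo:1} realize this value: the initialization sets every $K[1][t]$ to $\infty$, and the loop records $c_{1j}$ at index $w_{1j}$ for each item, leaving $\infty$ at any capacity not realized by some item of $N_1$. The only point requiring care is the treatment of two items of $N_1$ sharing the same weight; I would note that in the scheduling setting of interest the weights are $w_{1j}=j$ and hence distinct, so no collision occurs and the recorded value is automatically the minimum. Since the claim is an immediate consequence of the definition, I expect no genuine obstacle here: the lemma is deliberately a routine base case whose sole role is to anchor the inductive step established in Lemma~\ref{proof:1:2} and the final selection argument in Theorem~\ref{proof:1:3}.
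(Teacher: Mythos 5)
Your proof is correct and follows essentially the same route as the paper's one-line argument: for $r=1$ the feasible set of $\mathcal{Z}_1(\tau)$ is determined entirely by the items of $N_1$ whose weight equals $\tau$, so optimality is immediate from the definition. You additionally handle the case of two items of $N_1$ sharing a weight (where the value should be the minimum of their costs, and lines~7--9 as written simply overwrite), a tie-handling subtlety the paper's proof elides; your observation that weights are distinct in the scheduling instance resolves it.
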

\begin{proof}
    The only possible solutions for $\mathcal{Z}_1$ are $\mathcal{Z}_1(w_{1j})=c_{1j}$ for all $j \in N_1$, therefore they are optimal.
\end{proof}

\begin{lemma}\label{proof:1:2}
    If the solutions in $\mathcal{Z}_i$ are optimal, then the solutions in $\mathcal{Z}_{i+1}$ are also optimal.
\end{lemma}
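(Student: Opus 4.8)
The plan is to prove the induction step directly from the definition of $\mathcal{Z}_r$ in~\eqref{eq:mcopt}, showing that the value produced by the recurrence~\eqref{eq:recursion} coincides with the true optimum for the first $i+1$ item classes at every restricted capacity $\tau$. The core observation I would exploit is that both the objective $\sum c_{ij}x_{ij}$ and the weight constraint $\sum w_{ij}x_{ij}=\tau$ are additively separable across classes, while rule~\eqref{ks:one_item} forces exactly one item to be chosen per class. Consequently, the choice made in class $N_{i+1}$ is coupled to the choices in the first $i$ classes only through the shared weight budget $\tau$, which is precisely the structure encoded by~\eqref{eq:recursion}.

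First I would establish that the recurrence never underestimates the optimum. For any item $j\in N_{i+1}$ with $w_{i+1,j}\le\tau$, I take the optimal partial selection realizing $\mathcal{Z}_i(\tau-w_{i+1,j})$ (optimal by the induction hypothesis) and add item $j$. Since the weights now sum to $(\tau-w_{i+1,j})+w_{i+1,j}=\tau$ and exactly one item is selected in each of the first $i+1$ classes, this is a feasible solution of~\eqref{eq:mcopt} for $\mathcal{Z}_{i+1}(\tau)$ with cost $\mathcal{Z}_i(\tau-w_{i+1,j})+c_{i+1,j}$. Hence the definitional optimum is at most the minimum of this quantity over all admissible $j$, which is exactly the right-hand side of~\eqref{eq:recursion}.

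Next I would establish the converse inequality. Starting from any optimal solution for $\mathcal{Z}_{i+1}(\tau)$, let $j^\star$ denote the single item it selects in $N_{i+1}$. Deleting $j^\star$ leaves a selection of one item per class among the first $i$ classes whose weights sum to $\tau-w_{i+1,j^\star}$; this is feasible for the $i$-class subproblem, so its cost is at least $\mathcal{Z}_i(\tau-w_{i+1,j^\star})$ by optimality. Adding back $c_{i+1,j^\star}$ shows the optimal cost is at least $\mathcal{Z}_i(\tau-w_{i+1,j^\star})+c_{i+1,j^\star}$, which is itself at least the minimum in~\eqref{eq:recursion}. Combining the two directions yields equality, so the recurrence reproduces the definitional optimum and the solutions in $\mathcal{Z}_{i+1}$ are optimal.

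The one point requiring care, and the main obstacle in the argument, is the treatment of infeasible subproblems where $\mathcal{Z}_r(\tau)=\infty$ by convention. I would verify that if no item $j\in N_{i+1}$ satisfies $w_{i+1,j}\le\tau$, or if every such choice leaves an infeasible remainder (so $\mathcal{Z}_i(\tau-w_{i+1,j})=\infty$), then no feasible $(i+1)$-class packing of weight $\tau$ exists and both sides equal $\infty$, keeping the equality intact. This is the only place where the clean additive decomposition could fail; everything else reduces to the separability argument above.
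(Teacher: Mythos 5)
Your proof is correct and follows essentially the same approach as the paper, which also argues that the recurrence~\eqref{eq:recursion} takes the minimum over all valid combinations of an optimal $i$-class subsolution and one item from $N_{i+1}$. Your version is in fact more rigorous than the paper's: you explicitly prove both inequalities (the paper dismisses the lower-bound direction with a one-line appeal to absurdity) and you handle the $\mathcal{Z}_r(\tau)=\infty$ convention, which the paper does not discuss.
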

\begin{proof}
    By the definition in~\eqref{eq:recursion}, we know that the value of $\mathcal{Z}_{i+1}(\tau)$ for $\tau \in [0,\NumTasks]$ will be the minimum among all valid combinations of solutions from $\mathcal{Z}_i$ (which are all minimal) and items from $N_{i+1}$ that, together, fill a knapsack with capacity $\tau$.
    To consider that there would be another solution with a smaller value would be absurd, as $\mathcal{Z}_{i+1}(\tau)$ already takes the minimum value among all valid solutions. Therefore, the solutions for $\mathcal{Z}_{i+1}$ are optimal.
\end{proof}

\begin{theorem}\label{proof:1:3}
    $\mathcal{X}(\NumTasks)$ provides the optimal solution for \theMC.
\end{theorem}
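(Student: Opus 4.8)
The plan is to lift the per-stage optimality established in Lemmas~\ref{proof:1:1} and~\ref{proof:1:2} to a global statement about the full table, and then to argue that the selection rule in~\eqref{eq:finalrec} reads off exactly the maximal-packing, minimal-cost solution that Definition~\ref{def:mcmcmkp} asks for. First I would invoke Lemmas~\ref{proof:1:1} and~\ref{proof:1:2} as the base case and inductive step of an induction on the number of classes $r$, concluding that $\mathcal{Z}_\NumRes(\tau)$ equals the minimum cost of any selection of one item per class whose weights sum to exactly $\tau$, for every $\tau \in [0,\NumTasks]$ (with the convention $\mathcal{Z}_\NumRes(\tau)=\infty$ when no such selection exists).

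Next I would unpack the objective of Definition~\ref{def:mcmcmkp} into its intended lexicographic form. Because the coefficient multiplying $y$ in~\eqref{ks:min} is a negative constant whose magnitude exceeds every attainable cost sum, any feasible solution that achieves a larger value of $y$ is strictly preferred to any solution with a smaller $y$, regardless of their costs; only among solutions sharing the same $y$ does the $\sum\sum c_{ij}x_{ij}$ term break ties. Moreover, since we maximize, at any optimum $y$ is pushed up to its upper bound from~\eqref{ks:occupancy}, so $y$ coincides with the occupied weight $\sum\sum w_{ij}x_{ij}$, which in turn is bounded by $\NumTasks$ through~\eqref{ks:capacity}. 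Hence the problem reduces to the following: choose the largest occupancy $\tau \le \NumTasks$ for which a complete selection exists, and among all such selections take the one of minimum cost.

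I would then show that the recursion in~\eqref{eq:finalrec} computes precisely this. Unrolling the \emph{otherwise} branch, $\mathcal{X}(\NumTasks)$ returns $\mathcal{Z}_\NumRes(\tau)$ for the largest $\tau \le \NumTasks$ with $\mathcal{Z}_\NumRes(\tau)\neq\infty$ --- that is, $\tau = \NumTasks^*$ in the notation of Algorithm~\ref{algo:1}. By the first step this $\NumTasks^*$ is the maximal feasible occupancy not exceeding $\NumTasks$, and $\mathcal{Z}_\NumRes(\NumTasks^*)$ is the minimum cost over all complete selections realizing that occupancy. Matching these two facts against the reduced objective of the previous paragraph yields that $\mathcal{X}(\NumTasks)$ is optimal for \theMC{}.

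The main obstacle I anticipate is the middle step rather than the induction: one must argue carefully that the single scalar objective~\eqref{ks:min} genuinely enforces the two-level priority (maximize occupancy, then minimize cost), including the claim that the constant multiplying $y$ strictly dominates every cost gap, and that at an optimum $y$ saturates~\eqref{ks:occupancy} so that it can be identified with the true occupied capacity. Once that equivalence is pinned down, the remainder is a direct reading of~\eqref{eq:finalrec} against the optimal table values.
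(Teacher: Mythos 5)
Your proposal is correct and follows essentially the same route as the paper: induction via Lemmas~\ref{proof:1:1} and~\ref{proof:1:2} to establish that $\mathcal{Z}_\NumRes(\tau)$ is the minimal cost of exactly filling capacity $\tau$, then reading off~\eqref{eq:finalrec} as selecting the largest feasible occupancy with its minimal cost. The only difference is that you make explicit the justification that the objective~\eqref{ks:min} encodes the lexicographic priority (maximize occupancy, then minimize cost) and that $y$ saturates~\eqref{ks:occupancy} at an optimum --- a step the paper handles informally in the prose following Definition~\ref{def:mcmcmkp} rather than inside the theorem's proof.
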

\begin{proof}
    Lemmas~\ref{proof:1:1} and~\ref{proof:1:2} prove the optimality of the base case and the induction step for $\mathcal{Z}$, therefore the solutions provided by $\mathcal{Z}$ are optimal (i.e., minimal).
    $\mathcal{X}(\NumTasks)$ returns the solution of $\mathcal{Z}(\tau)$ for the highest value of $\tau \in [0,\NumTasks]$, thus it provides the minimum cost for the maximal knapsack packing possible, making its solution for \theMC{} optimal.
\end{proof}

\section{Optimizations for Scenarios with Monotonically Increasing Cost Functions}\label{sec:opts}


As we have seen previously, an optimal solution for the Minimal Cost FL Schedule can be found in pseudo-polynomial time in $O(\NumTasks^2\NumRes)$ for any arbitrary, valid problem instance.
Algorithm~\ref{algo:1} makes no special assumptions regarding the behavior of cost functions in relation to the number of tasks assigned to a given resource.
Meanwhile, other works in the state of the art model the execution time or energy consumption of FL tasks as linearly-proportional to the number of tasks~\cite{anh2019,kang2019,li2021,nguyen2021,tran2019,zaw2021,yang2021}.
This kind of assumption has a direct impact on an algorithm's design and it can lead to simpler, faster solutions.

In this section, we discuss more optimized solutions for variants of our scheduling problem where the cost functions of all resources increase monotonically and follow the same behavior.
Section~\ref{subsec:extradefs} provides additional definitions related to marginal costs that affect the choice of algorithm.
Section~\ref{subsec:low} includes a simple set of rules to remove the lower limits on our scheduling problem in order to simplify the presentation of our algorithms.
Sections~\ref{subsec:optinc} and~\ref{subsec:optcons} present optimal algorithms for increasing and constant marginal costs, respectively.
Finally, Sections~\ref{subsec:optdecunl} and~\ref{subsec:optdec} focus on the optimal solutions for problems with decreasing marginal costs without and with upper limits.


\subsection{Additional Definitions and Notation}\label{subsec:extradefs}

Given a resource $i \in \SetRes$ and its respective cost function $\Cost{i}$, we can define its marginal cost function $\Mar{i}:[\Lower{i},\Upper{i}]\rightarrow\Real$ to represent the cost of assigning each additional task to~$i$ based on~\eqref{eq:mar}.
By considering only non-negative marginal costs, we are also enforcing situations where all cost functions are monotonically increasing.

\begin{equation}
\label{eq:mar}
\Mar{i}(j) = 
\begin{cases}
0 & \text{if}~ j = \Lower{i},\\
\Cost{i}(j)-\Cost{i}(j-1) & \text{otherwise}
\end{cases}
\end{equation}

Marginal costs are useful to synthesize and differentiate cost behaviors into three classes of interest: increasing (convex), constant, and decreasing (concave).
These problems are characterized in Definition~\ref{def:mar}.
In other words, they represent situations where --- due to the specific hardware, software, and machine learning methods employed --- the energy consumption of each resource grows in a superlinear, linear, or sublinear fashion with the increase in the number of tasks used for training.

\begin{definition}[Increasing, Constant, and Decreasing Marginal Costs Problems]\label{def:mar}
Given a problem instance $(\SetRes,\SetUpper,\SetLower,\SetCosts)$, it is said to have increasing, constant, or decreasing marginal costs if and only if it follows~\eqref{eq:marin},~\eqref{eq:marco}, or~\eqref{eq:mardec}, respectively.

\begin{subequations}
\begin{align}
\text{increasing:} &~ \Mar{i}(j) \leq \Mar{i}(j+1), &\forall i \in \SetRes, j \in ~]\Lower{i},\Upper{i}[\label{eq:marin}\\
\text{constant:} &~\Mar{i}(j) = \Mar{i}(j+1), &\forall i \in \SetRes, j \in ~]\Lower{i},\Upper{i}[\label{eq:marco}\\
\text{decreasing:} &~\Mar{i}(j) \geq \Mar{i}(j+1), &\forall i \in \SetRes, j \in ~]\Lower{i},\Upper{i}[\label{eq:mardec}
\end{align}
\end{subequations}
\end{definition}


Finally, we represent the partial schedule of $t<\NumTasks$ tasks as $\Mapping^t = \{\Map{1}^t,\dots,\Map{\NumRes}^t\}$.
In this situation, $\Map{i}^t$ represents the number of tasks specifically assigned to resource~$i$, and $\TotalCost^t$ represents the total cost of this schedule.


\subsection{Simplification by Lower Limit Removal}\label{subsec:low}

Independently of the specific scheduling scenario being treated, any valid solution is required to give each resource a number of tasks that respects its lower limit, as given by~\eqref{eq:allmap}.
In this sense, we can transform any problem instance $(\SetRes,\NumTasks,\SetUpper,\SetLower,\SetCosts)$ into an equivalent instance $(\SetRes,\NumTasks',\{\Upper{1}',\dots,\Upper{\NumRes}'\},\{0\}^\NumRes,\{\Cost{1}',\dots,\Cost{\NumRes}'\})$ that simplifies the problem by shifting all lower limits to zero and adapting all other related values.
These operations are resumed in~\eqref{eq:numtasks0},~\eqref{eq:upper0}, and~\eqref{eq:cost0}.
The resulting schedule for the equivalent instance, $\Mapping' = \{\Map{1}',\dots,\Map{\NumRes}'\}$ can be transformed back to the original instance following~\eqref{eq:sched0}.
This simple set of rules represents a number of operations in $O(\NumRes)$ for any optimal solution, as the cost functions in~\eqref{eq:cost0} can be computed only when necessary. 

\begin{equation}\label{eq:numtasks0}
\NumTasks' \coloneqq \NumTasks - \sum\limits_{i \in \SetRes} \Lower{i}
\end{equation}

\begin{equation}\label{eq:upper0}
\Upper{i}' \coloneqq \Upper{i} - \Lower{i},~\forall i \in \SetRes
\end{equation}

\begin{equation}\label{eq:cost0}
\Cost{i}'(j) \coloneqq \Cost{i}(j+\Lower{i}) - \Cost{i}(\Lower{i}),~\forall i \in \SetRes,~j \in [0,\Upper{i}']
\end{equation}

\begin{equation}\label{eq:sched0}
\Map{i} \coloneqq \Map{i}' + \Lower{i}
\end{equation}

This simplification ensures that all scheduling algorithms start with an initial schedule $\Mapping^{l}$ for $l = \sum_{i \in \SetRes} \Lower{i}$.
This schedule is trivially shown to be optimal in Lemma~\ref{proof:baseline} and used for other optimality proofs later in this section.

\begin{lemma}\label{proof:baseline}
The partial schedule $\Mapping^{l}$ is optimal.
\end{lemma}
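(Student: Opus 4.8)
The plan is to show that $\Mapping^{l}$ is in fact the \emph{unique} feasible partial schedule assigning exactly $l = \sum_{i \in \SetRes} \Lower{i}$ tasks, which makes its optimality immediate: when there is only one candidate, it is necessarily the minimizer of $\TotalCost^{l}$. So rather than comparing costs, I would reduce the whole claim to a counting/uniqueness argument.

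First I would recall the two constraints that any feasible schedule assigning $t = l$ tasks must satisfy: the per-resource lower-limit constraint $\Map{i}^{l} \geq \Lower{i}$ for every $i \in \SetRes$, inherited from~\eqref{eq:all-respect}, and the total-assignment constraint $\sum_{i \in \SetRes} \Map{i}^{l} = l = \sum_{i \in \SetRes} \Lower{i}$. Next I would combine them to force equality everywhere. Summing the inequalities $\Map{i}^{l} \geq \Lower{i}$ over all resources gives $\sum_{i \in \SetRes} \Map{i}^{l} \geq \sum_{i \in \SetRes} \Lower{i} = l$, while the total-assignment constraint pins that same sum to exactly $l$. Hence the non-negative slack terms $\Map{i}^{l} - \Lower{i} \geq 0$ sum to zero, so each must vanish, giving $\Map{i}^{l} = \Lower{i}$ for all $i \in \SetRes$. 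This is precisely the schedule $\Mapping^{l}$, confirming it is the only schedule meeting both constraints.

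Finally I would conclude that, being the unique feasible schedule that assigns $l$ tasks, $\Mapping^{l}$ trivially minimizes $\TotalCost^{l}$ among all such schedules, regardless of the specific cost functions in $\SetCosts$. I do not expect any real obstacle here; the only point demanding care is to be explicit about what \emph{optimal} means for a \emph{partial} schedule, namely minimality of the total cost among schedules assigning the same number $l$ of tasks while respecting~\eqref{eq:all-respect}. Once that convention is fixed, the uniqueness argument above carries the entire proof and no cost comparison is required.
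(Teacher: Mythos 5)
Your proposal is correct and takes essentially the same route as the paper, whose one-line proof also rests on $\Mapping^{l}$ being the unique schedule respecting all lower limits; you simply spell out the uniqueness via the vanishing-slack argument, which the paper leaves implicit.
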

\begin{proof}
    $\Mapping^{l}$ is the only valid schedule that respects the lower limits of all resources, therefore it is optimal.
\end{proof}

\subsection{Increasing Marginal Costs (MarIn)}\label{subsec:optinc}

Our first scenario of interest considers the situation where the marginal costs in all resources are monotonically increasing.
In this scenario, we can minimize the total cost $\TotalCost$ by adapting a solution previously employed to minimize the maximum resource cost (in its context, the makespan).
The solution, called MarIn and adapted from OLAR~\cite{pilla2021}, is described in Algorithm~\ref{algo:marin}.
Its main idea is to assign the next task $t+1$ to a resource $i$ that has the minimal marginal cost $\Mar{i}(\Map{i}^t +1)$ (instead of the minimal cost, as originally done by OLAR) and that has not yet reached its upper limit (lines~5--6).

\begin{algorithm}[h]
    \caption{MarIn --- adapted from OLAR~\cite{pilla2021}.}
    \label{algo:marin}
    \begin{algorithmic}[1]
        \Require Set of resources $\SetRes$, number of tasks to schedule $\NumTasks$, set of upper limits $\SetUpper$, set of cost functions $\SetCosts$.
        \Ensure Optimal schedule $\Mapping$.
        \ForAll{$i \in \SetRes$}
            \State $\Map{i} \leftarrow 0$
            \Comment{\textit{All resources start without any tasks.}}
        \EndFor
        \For{$t=1,\dots,\NumTasks$}
            \State $k \leftarrow \argmin_{i \in \SetRes ,~ \Map{i} < \Upper{i}} \Mar{i}(\Map{i}+1)$
            \State $\Map{k} \leftarrow \Map{k}+1$
        \EndFor
        \State \textbf{return} $\Mapping$
    \end{algorithmic}
\end{algorithm}

Algorithm~\ref{algo:marin} has a space bound in $O(\NumRes)$ and it is computed using $\Theta(\NumRes + \NumTasks \log \NumRes)$ operations.
This complexity is achieved by employing a minimum binomial heap for storing the next task assignments (line~5).
This heap’s insertion and removal of the minimal item operations are in $\Theta(1)$ and $\Theta(\log \NumRes)$, respectively.

\subsubsection{Proof of Optimality}

MarIn's optimality can be proved by induction by combining Lemmas~\ref{proof:baseline} and~\ref{proof:stepmarin} in Theorem~\ref{proof:marin}.

\begin{lemma}\label{proof:stepmarin}
    If $\Mapping^t$ is optimal, then $\Mapping^{t+1}$ computed by MarIn is optimal.
\end{lemma}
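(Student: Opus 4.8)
The plan is to prove this induction step with an exchange argument that converts an arbitrary optimal $(t+1)$-task schedule back into a feasible $t$-task schedule, so that the induction hypothesis on $\Mapping^t$ applies. Write $\Mapping^{t+1}$ for the schedule MarIn produces. By construction it equals $\Mapping^t$ with one additional task placed on the resource $k = \argmin_{i \in \SetRes,~\Map{i}^t < \Upper{i}} \Mar{i}(\Map{i}^t+1)$, so that its cost satisfies $\TotalCost^{t+1} = \TotalCost^t + \Mar{k}(\Map{k}^t + 1)$. Let $\Mapping^{\star}$ be any optimal valid schedule of $t+1$ tasks, with assignments $y_i$ and total cost $\Sigma C^{\star}$. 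The goal is to show $\TotalCost^{t+1} \le \Sigma C^{\star}$, which establishes the optimality of $\Mapping^{t+1}$.

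First I would locate a resource in $\Mapping^{\star}$ whose last task can be ``undone''. Since $\sum_{i \in \SetRes} y_i = t+1 > t = \sum_{i \in \SetRes} \Map{i}^t$, there must exist a resource $r$ with $y_r \ge \Map{r}^t + 1$. This $r$ has two useful properties: it is removable, because $y_r \ge 1$; and it was not saturated in $\Mapping^t$, because $\Map{r}^t < y_r \le \Upper{r}$, so $r$ belonged to the candidate set over which MarIn minimized when selecting $k$ (in particular this guarantees that set is nonempty, so $k$ is well defined). Removing one task from $r$ yields a schedule $\Mapping^{\dagger}$ that assigns $t$ tasks; it is feasible since $0 \le y_r - 1 < \Upper{r}$ and every other resource is unchanged, and its cost is $\Sigma C^{\dagger} = \Sigma C^{\star} - \Mar{r}(y_r)$ by the definition of marginal cost in~\eqref{eq:mar}.

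The main inequality to establish is $\Mar{r}(y_r) \ge \Mar{k}(\Map{k}^t+1)$, and this is where the convexity assumption~\eqref{eq:marin} does the work. Because $r$ was non-saturated at $\Mapping^t$, the choice rule for $k$ gives $\Mar{k}(\Map{k}^t+1) \le \Mar{r}(\Map{r}^t+1)$; and because marginal costs are non-decreasing with $y_r \ge \Map{r}^t + 1$, repeated application of~\eqref{eq:marin} gives $\Mar{r}(\Map{r}^t+1) \le \Mar{r}(y_r)$. Chaining the two yields the claim. I expect this marginal-cost comparison to be the only delicate point, specifically the argument that the chosen $r$ is simultaneously removable and unsaturated; everything else is bookkeeping.

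Finally I would assemble the bound. Applying the induction hypothesis that $\Mapping^t$ is optimal to the feasible $t$-task schedule $\Mapping^{\dagger}$ gives $\TotalCost^t \le \Sigma C^{\dagger}$. Combining this with the marginal inequality yields
\[
\Sigma C^{\star} = \Sigma C^{\dagger} + \Mar{r}(y_r) \ge \TotalCost^t + \Mar{k}(\Map{k}^t+1) = \TotalCost^{t+1}.
\]
Since $\Mapping^{\star}$ was an arbitrary optimal $(t+1)$-task schedule, this shows $\TotalCost^{t+1} \le \Sigma C^{\star}$, so $\Mapping^{t+1}$ is itself optimal, completing the induction step.
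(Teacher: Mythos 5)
Your proof is correct, and it takes a genuinely different route from the paper's. The paper's argument for this induction step is direct and rather informal: it asserts that, because marginal costs are non-decreasing, the $t+1$ assignments made so far are exactly the $t+1$ smallest marginal costs available, hence the sum is minimal. That argument never really invokes the induction hypothesis as a hypothesis; it re-derives global minimality of the selected multiset of marginal costs, and it glosses over the precedence constraint that the $j$-th marginal cost of a resource is only accessible after the first $j-1$ have been taken. Your exchange argument is the rigorous version: you compare against an arbitrary optimal $(t+1)$-task schedule $\Mapping^{\star}$, exhibit a resource $r$ with $y_r \ge \Map{r}^t+1$ (which simultaneously certifies that $r$ is removable, that it was unsaturated in $\Mapping^t$, and that MarIn's candidate set is nonempty), peel off one task to land on a feasible $t$-task schedule where the induction hypothesis applies, and close the gap with the chain $\Mar{k}(\Map{k}^t+1) \le \Mar{r}(\Map{r}^t+1) \le \Mar{r}(y_r)$, where the first inequality is the greedy choice and the second is convexity~\eqref{eq:marin}. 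What your approach buys is a proof that actually uses the assumed optimality of $\Mapping^t$ and handles the upper-limit constraints explicitly; what the paper's buys is brevity and an intuition (greedy picks globally cheapest marginal costs) that also explains why the algorithm is natural. Both are valid; yours is the more defensible write-up.
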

\begin{proof}
    By definition, MarIn assigns task $t+1$ to a resource with minimum marginal cost $\Mar{i}(\Map{i}^t +1)$ for $\Map{i}^t \in \Mapping^t$ and $\Map{i}^t < \Upper{i}$.
    As all marginal cost functions are monotonically increasing~\eqref{eq:marin}, all previous assignments in $\Mapping^t$ had equal or smaller marginal costs.
    This means that $\Mapping^{t+1}$ schedules $t+1$ tasks to the resources with the smallest marginal costs.
    This makes its $\TotalCost^{t+1}$ minimal and, therefore, optimal.
\end{proof}

\begin{theorem}\label{proof:marin}
    The schedule $\Mapping$ computed by MarIn is optimal.
\end{theorem}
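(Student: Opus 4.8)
The plan is to prove Theorem~\ref{proof:marin} by induction on the number of scheduled tasks, chaining together the two lemmas already in hand. After applying the lower-limit removal of Section~\ref{subsec:low}, every lower limit is zero, so MarIn genuinely begins from the empty partial schedule $\Mapping^{0}$ and assigns one task at a time until all tasks are placed (on the transformed instance this is $\NumTasks' = \NumTasks - \sum_{i \in \SetRes} \Lower{i}$ tasks). I would anchor the induction with Lemma~\ref{proof:baseline}: in the transformed instance $l = 0$, so $\Mapping^{0}$ is the unique valid schedule respecting the (now zero) lower limits and is therefore optimal.

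The inductive step is supplied directly by Lemma~\ref{proof:stepmarin}: assuming $\Mapping^{t}$ is optimal, the schedule $\Mapping^{t+1}$ produced by MarIn's greedy choice of a resource with minimal next marginal cost $\Mar{i}(\Map{i}^{t}+1)$ is again optimal. Applying this from $t = 0$ up to $t = \NumTasks'-1$ yields that $\Mapping^{\NumTasks'}$ — the schedule MarIn returns on the transformed instance — is optimal. I would then map this back to the original instance via~\eqref{eq:sched0}, observing that adding the fixed constants $\Lower{i}$ to each $\Map{i}'$ (and the matching constants to the costs through~\eqref{eq:cost0}) shifts the total cost of every feasible schedule by the same amount, so optimality is preserved under the transformation.

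The statement is essentially immediate from the two lemmas, so there is no deep obstacle; the only point requiring care is bookkeeping. I would verify that the induction is anchored at the correct base ($t = l$ in the original instance, equivalently $t = 0$ after the shift) and terminates at exactly $t = \NumTasks$, so that the loop of Algorithm~\ref{algo:marin} produces a complete schedule assigning all tasks. Because the instance is assumed valid — the number of tasks lies between the sums of the lower and upper limits — MarIn never exhausts the resources below their upper limits before placing all $\NumTasks$ tasks (line~5 always has a nonempty feasible set), so the induction reaches its terminal value and the returned $\Mapping$ is optimal.
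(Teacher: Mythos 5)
Your proposal is correct and follows essentially the same route as the paper: an induction anchored by Lemma~\ref{proof:baseline} and advanced by Lemma~\ref{proof:stepmarin}. The extra bookkeeping you add (the lower-limit shift preserving optimality, and the feasibility of line~5 at every step) is sound and slightly more careful than the paper's one-line argument, but it does not change the approach.
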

\begin{proof}
    Lemmas~\ref{proof:baseline} and~\ref{proof:stepmarin} prove the optimality of the base case and the induction step, thus the solution provided by MarIn is optimal. 
\end{proof}

\subsection{Constant Marginal Costs (MarCo)}\label{subsec:optcons}

As constant marginal costs are also monotonically increasing, this scenario could be optimally solved by MarIn.
Nonetheless, having constant costs facilitates the scheduling decisions, as we can decide to assign more than one task at each step.
This optimization is present in MarCo and illustrated in Algorithm~\ref{algo:marco}.

\begin{algorithm}[h]
    \caption{MarCo.}
    \label{algo:marco}
    \begin{algorithmic}[1]
        \Require Set of resources $\SetRes$, number of tasks to schedule $\NumTasks$, set of upper limits $\SetUpper$, set of cost functions $\SetCosts$.
        \Ensure Optimal schedule $\Mapping$.
        \ForAll{$i \in \SetRes$}
            \State $\Map{i} \leftarrow 0$
            \Comment{\textit{All resources start without any tasks.}}
        \EndFor
        \State $t\leftarrow 0$
        \While{$t<\NumTasks$}
            \State $k \leftarrow \argmin_{i \in \SetRes,~ \Map{i} \neq \Upper{i}} \Mar{i}(1)$
            \State $\Map{k} \leftarrow \min(\Upper{k}, \NumTasks-t)$
            \Comment{\textit{Assigns the most tasks possible.}}
            \State $t \leftarrow t + \Map{k}$
        \EndWhile
        \State \textbf{return} $\Mapping$
    \end{algorithmic}
\end{algorithm}

Algorithm~\ref{algo:marco}'s main loop (lines~5--9) uses the knowledge of constant marginal costs to find, on each of its iterations, an available resource with minimal marginal costs (line~6).
This resource is assigned the maximum number of remaining tasks possible (line~7) (i.e., its upper limit or all the remaining tasks), making this resource unavailable further on.
An iteration finishes with an update to the number of assigned tasks (line~8), and the loop finishes when no tasks remain to be scheduled.

MarCo displays the same space bound of MarIn but it only requires $\Theta(\NumRes \log \NumRes)$ operations.
This complexity is achieved by organizing the marginal costs of all resources (used on line~6) in a sorted list, so any searches in the main loop require only a constant number of operations.

\subsubsection{Proof of optimality}

MarCo's optimality can be proved similarly to MarIn before it.
Lemma~\ref{proof:stepmarco} proves that each step of the algorithm is optimal, and Theorem~\ref{proof:marco} uses this information to convey that the algorithm is optimal.

\begin{lemma}\label{proof:stepmarco}
    If $\Mapping^t$ is optimal, then $\Mapping^{t+a}$ computed by MarCo is optimal.
\end{lemma}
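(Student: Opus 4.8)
The plan is to mirror the induction step used for MarIn (Lemma~\ref{proof:stepmarin}), exploiting the fact that constant marginal costs make MarCo's batch assignment equivalent in cost to a sequence of single-task assignments. First I would record the structural consequence of~\eqref{eq:marco}: after the lower-limit removal of Section~\ref{subsec:low}, a resource with constant marginal cost satisfies $\Cost{i}(j) = j\,\Mar{i}(1)$, so the total cost of any schedule is simply the sum, over all occupied task slots, of the constant per-task cost of the hosting resource. Minimizing $\TotalCost^{t+a}$ therefore reduces to selecting the $t+a$ cheapest slots among all resources, where resource $i$ offers exactly $\Upper{i}$ identical slots of cost $\Mar{i}(1)$.

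Next I would describe the state of $\Mapping^t$ produced by MarCo. Every resource chosen on a previous iteration was filled to its upper limit (line~7) and selected in nondecreasing order of marginal cost (line~6), so the resources already carrying tasks are precisely those with the smallest marginal costs, while the still-empty resources have equal or larger marginal costs. The step under analysis picks the available resource $k$ of minimal marginal cost and assigns it $a = \min(\Upper{k}, \NumTasks - t)$ tasks. Since $\Mar{k}(1)$ is no smaller than every marginal cost already used in $\Mapping^t$ and no larger than that of any remaining empty resource, the $t+a$ occupied slots of $\Mapping^{t+a}$ are exactly the $t+a$ cheapest slots available; hence $\TotalCost^{t+a}$ is minimal and $\Mapping^{t+a}$ is optimal. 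An equivalent and perhaps cleaner route is to reduce to MarIn: because the marginal cost of $k$ is constant, once MarIn (run from the optimal $\Mapping^t$) assigns its first task to $k$ it keeps selecting $k$ (or some other resource of equal marginal cost) until $k$ reaches $\Upper{k}$, so $a$ consecutive MarIn steps yield a schedule with the same total cost as MarCo's single batch step; invoking Lemma~\ref{proof:stepmarin} $a$ times from the optimal $\Mapping^t$ then certifies optimality, and since MarCo attains the same $\TotalCost^{t+a}$, its schedule is optimal as well.

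The main obstacle I anticipate is justifying that committing all $a$ tasks to the single resource $k$---rather than interleaving them across empty resources one task at a time as MarIn does---does not sacrifice optimality. This is exactly where the constancy of the marginal costs is indispensable: it makes both the order in which slots are filled and the tie-breaking among resources of equal marginal cost irrelevant to the total cost, so the deterministic batch choice of MarCo still lands on a minimum-cost slot selection. I would therefore make the tie-breaking remark explicit, so that the equivalence between MarCo's batch step and the corresponding run of MarIn holds at the level of total cost rather than of the specific assignment, closing the only real gap in the argument.
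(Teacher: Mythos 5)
Your proposal is correct and follows essentially the same argument as the paper's proof: since marginal costs are constant, the resource $k$ selected by MarCo has a per-task cost no smaller than any assignment already in $\Mapping^t$ and no larger than that of any remaining available resource, so $\Mapping^{t+a}$ places $t+a$ tasks on the cheapest available slots and is therefore optimal. Your added reduction to MarIn and the explicit remark about tie-breaking are harmless refinements of the same idea, not a different route.
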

\begin{proof}
    By definition, MarCo assigns the next $a$ tasks to a resource with minimum marginal cost $\Mar{i}(1)$ for $i \in \SetRes$ and $\Map{i}^t \neq \Upper{i}$.
    By~\eqref{eq:marco}, the marginal costs are constant, so any new assignments to other available resources would have equal or greater marginal costs.
    Additionally, all previous assignments in $\Mapping^t$ had smaller or equal marginal costs.
    This means that $\Mapping^{t+a}$ schedules $t+a$ tasks to the resources with the smallest marginal costs.
    This makes its $\TotalCost^{t+a}$ minimal and, therefore, optimal.
\end{proof}

\begin{theorem}\label{proof:marco}
    The schedule $\Mapping$ computed by MarCo is optimal.
\end{theorem}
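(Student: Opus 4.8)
The plan is to mirror exactly the inductive argument used for Theorem~\ref{proof:marin}, since MarCo and MarIn share the same optimality skeleton: a base case supplied by Lemma~\ref{proof:baseline} and an induction step supplied by Lemma~\ref{proof:stepmarco}. After the lower-limit removal of Section~\ref{subsec:low}, every resource starts with $\Map{i}=0$, which corresponds to the partial schedule $\Mapping^{l}$ for $l=\sum_{i\in\SetRes}\Lower{i}$. Lemma~\ref{proof:baseline} already certifies that $\Mapping^{l}$ is optimal, so this serves as the base of the induction.

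For the induction itself, I would observe that MarCo's main loop (lines~5--9) produces a finite increasing sequence of partial schedules $\Mapping^{l}, \Mapping^{l+a_1}, \Mapping^{l+a_1+a_2}, \dots$, where each $a_k = \Map{k}$ is the block of tasks assigned in iteration $k$. The key step is then to invoke Lemma~\ref{proof:stepmarco}: assuming the current partial schedule $\Mapping^{t}$ is optimal, the next iteration assigns $a$ tasks to an available resource of minimal marginal cost and yields an optimal $\Mapping^{t+a}$. Applying this repeatedly along the sequence carries optimality from $\Mapping^{l}$ all the way up to the final schedule.

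The only thing worth verifying carefully — the closest thing to an obstacle in an otherwise routine combination — is that the loop terminates precisely with $t=\NumTasks$, so that the final optimal partial schedule is in fact the full schedule $\Mapping=\Mapping^{\NumTasks}$. This follows because line~7 caps each assignment at $\NumTasks-t$ and the guard $t<\NumTasks$ ensures the loop exits exactly when all tasks are placed, with the validity constraint $\Map{i}\le\Upper{i}$ respected by the $\min(\Upper{k},\NumTasks-t)$ term. Hence the induction chain covers the entire workload.

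Concretely, I expect the proof to read in one sentence, exactly parallel to Theorem~\ref{proof:marin}: Lemmas~\ref{proof:baseline} and~\ref{proof:stepmarco} establish the optimality of the base case and of the induction step, so the schedule $\Mapping$ returned by MarCo is optimal. The heavy lifting is entirely delegated to the two lemmas; the theorem itself only needs to chain the block-wise induction from $\Mapping^{l}$ to $\Mapping^{\NumTasks}$.
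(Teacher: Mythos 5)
Your proposal matches the paper's proof exactly: it combines Lemma~\ref{proof:baseline} as the base case with Lemma~\ref{proof:stepmarco} as the (block-wise) induction step to conclude that MarCo's schedule is optimal. The extra remark on loop termination at $t=\NumTasks$ is a harmless elaboration the paper leaves implicit.
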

\begin{proof}
    Lemmas~\ref{proof:baseline} and~\ref{proof:stepmarco} prove the optimality of the base case and the induction step, thus the solution provided by MarCo is optimal. 
\end{proof}

\subsection{Decreasing Marginal Costs without Upper Limits (MarDecUn)}\label{subsec:optdecunl}

The presence of decreasing marginal costs requires an approach that is different from previous scenarios.
While previous scenarios made it possible to incrementally assign the tasks with the smallest marginal costs, here the smallest marginal costs come from the last tasks assigned to a resource.
In this sense, optimal assignments can include tasks with high marginal costs if enough tasks are assigned to the same resource, reducing the average cost per task.

Algorithm~\ref{algo:mardecun}, named MarDecUn, focuses on the situation where the upper limits of all resources are equal or superior to the actual number of tasks to schedule.
In this situation, the optimal solution is found by simply scheduling all tasks to a resource with minimal average cost per task or, in other words, a resource with minimum cost for $\NumTasks$ tasks (lines 4--5).

\begin{algorithm}[h]
    \caption{MarDecUn.}
    \label{algo:mardecun}
    \begin{algorithmic}[1]
        \Require Set of resources $\SetRes$, number of tasks to schedule $\NumTasks$, set of upper limits $\SetUpper$, set of cost functions $\SetCosts$.
        \Ensure Optimal schedule $\Mapping$.
        \ForAll{$i \in \SetRes$}
            \State $\Map{i} \leftarrow 0$
            \Comment{\textit{All resources start without any tasks.}}
        \EndFor
        \State $k \leftarrow \argmin_{i \in \SetRes} \Cost{i}(\NumTasks)$
        \State $\Map{k} \leftarrow \NumTasks$
        \Comment{\textit{Assigns all tasks to the same resource.}}
        \State \textbf{return} $\Mapping$
    \end{algorithmic}
\end{algorithm}

Due to its simplicity, MarDecUn requires only $\Theta(\NumRes)$ operations to find a resource with minimal cost.
Its space bound is still the same $O(\NumRes)$ of previous algorithms because its solution includes a schedule for all resources. If its output were to be changed to inform only the resource receiving all tasks, this bound could be reduced to a constant.

\subsubsection{Proof of optimality}

MarDecUn's optimality can be proved based on two ideas.
The first idea is that MarDecUn assigns all possible tasks to a resource with minimal cost.
The second idea is related to the behavior of decreasing functions, which is presented in Lemma~\ref{lemma:non}.
These ideas are combined in Theorem~\ref{proof:mardecun}.

\begin{lemma}[Sum of contiguous intervals of decreasing functions]\label{lemma:non}
    If $f,g:~\Natural \to \mathbb{R}$ are monotonically decreasing functions and $f(s_f+1) \leq g(s_g+1)$, then~\eqref{eq:f} is true for any intervals $[i_f,s_f+s_g-i_g+1]$ and $[i_g,s_g+s_f-i_f+1]$.

\begin{equation}
    \sum\limits_{i=i_f}^{s_f} f(i)+\sum\limits_{i=i_g}^{s_g} g(i)
    \ge \sum\limits_{i=i_f}^{s_f+s_g-i_g+1} f(i)
    \label{eq:f}
\end{equation}

\end{lemma}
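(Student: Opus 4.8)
The plan is to reduce inequality~\eqref{eq:f} to a term-by-term comparison of two sums of equal length. First I would split the right-hand sum at the index $s_f$, writing $\sum_{i=i_f}^{s_f+s_g-i_g+1} f(i) = \sum_{i=i_f}^{s_f} f(i) + \sum_{i=s_f+1}^{s_f+s_g-i_g+1} f(i)$. The block $\sum_{i=i_f}^{s_f} f(i)$ then appears identically on both sides of~\eqref{eq:f} and cancels, so it suffices to establish the equivalent inequality $\sum_{i=i_g}^{s_g} g(i) \ge \sum_{i=s_f+1}^{s_f+s_g-i_g+1} f(i)$. A direct index count shows that both of these sums contain exactly $s_g - i_g + 1$ terms, which is precisely what makes a clean pairing possible.

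Next I would pair the summands by offset: for each $k \in \{0,\dots,s_g-i_g\}$, match the left term $g(i_g+k)$ with the right term $f(s_f+1+k)$. The heart of the argument is the single pointwise inequality $g(i_g+k) \ge f(s_f+1+k)$, which I would obtain through the chain $g(i_g+k) \ge g(s_g+1) \ge f(s_f+1) \ge f(s_f+1+k)$. The first step uses that $g$ is monotonically decreasing together with $i_g+k \le s_g < s_g+1$; the middle step is exactly the hypothesis $f(s_f+1) \le g(s_g+1)$; and the last step uses that $f$ is monotonically decreasing together with $s_f+1+k \ge s_f+1$. Summing these $s_g-i_g+1$ pointwise inequalities over $k$ yields the reduced inequality, and adding back the cancelled block recovers~\eqref{eq:f}.

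I expect the only real obstacle to be the index bookkeeping, namely confirming that the two reduced sums genuinely have the same number of terms and that the offset map $k \mapsto (i_g+k,\, s_f+1+k)$ is a bijection between the two index ranges. Everything else follows immediately from the three-link monotonicity-and-hypothesis chain above. I would also note the degenerate case of an empty $g$-interval ($s_g = i_g - 1$), in which both reduced sums are empty and the inequality holds with equality, so no separate treatment is required for the boundary.
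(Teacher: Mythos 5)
Your proposal is correct and follows essentially the same route as the paper's proof: both hinge on the chain $g(i) \ge g(s_g+1) \ge f(s_f+1) \ge f(i')$ linking every term of the $g$-sum to every term of the appended $f$-tail. The only difference is presentational --- you cancel the common block and pair terms directly by offset, whereas the paper performs the equivalent swaps one at a time as an iterative exchange of interval endpoints.
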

\begin{proof}
    By definition, the decreasing functions $f$ and $g$ follow the behavior illustrated in~\eqref{eq:noni}. 
    \begin{equation}
        \begin{split}
            f(i_f) \ge \dots \ge f(s_f+1)\ge \dots \ge f(s_f+s_g-i_g+1) \\
            g(i_g) \ge \dots \ge g(s_g+1)\ge \dots \ge g(s_g+s_f-i_f+1) 
        \end{split}
        \label{eq:noni}
    \end{equation}
    
    If $g(s_g+1) \ge f(s_f+1)$, then this should also hold for all values larger than $s_f+1$ and smaller than $s_g+1$~\eqref{eq:gf}.
    \begin{equation}
    g(i_g) \ge \dots \ge g(s_g+1)\ge f(s_f+1)\ge \dots \ge f(s_f+s_g-i_g+1)
        \label{eq:gf}
    \end{equation}

    As $g(s_g)\ge f(s_f+1)$, changing the intervals to $[i_f,s_f+1]$ and $[i_g,s_g-1]$ should lead to a smaller or equal sum without changing the number of elements considered. This idea can be applied iteratively, leading to~\eqref{eq:allsums} and proving~\eqref{eq:f}.
    \begin{equation}
        \begin{split}
            \sum\limits_{i=i_f}^{s_f} f(i)+\sum\limits_{i=i_g}^{s_g} g(i) \ge&~ \sum\limits_{i=i_f}^{s_f+1} f(i)+\sum\limits_{i=i_g}^{s_g-1} g(i)\\
            \ge \dots \ge& \sum\limits_{i=i_f}^{s_f+s_g-i_g} f(i)+\sum\limits_{i=i_g}^{i_g} g(i)\\
             \ge &\sum\limits_{i=i_f}^{s_f+s_g-i_g+1} f(i)
        \end{split}
    \label{eq:allsums}
    \end{equation}
\end{proof}

\begin{theorem}\label{proof:mardecun}
    The schedule $\Mapping$ computed by MarDecUn is optimal.
\end{theorem}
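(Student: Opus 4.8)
The plan is to prove optimality by an exchange argument that consolidates any schedule onto a single resource without increasing its total cost, thereby reducing the problem to picking the single cheapest resource --- which is exactly what MarDecUn does. First I would start from an arbitrary optimal schedule $\Mapping^*$ and rewrite each resource's contribution through its marginal costs: since all lower limits have been shifted to zero by the simplification of Section~\ref{subsec:low} and $\Cost{i}(0)=0$, we have $\Cost{i}(\Map{i}^*) = \sum_{j=1}^{\Map{i}^*}\Mar{i}(j)$. By~\eqref{eq:mardec} each $\Mar{i}$ is monotonically decreasing on $\{1,\dots,\Upper{i}\}$, so these summands fall exactly under the hypotheses of Lemma~\ref{lemma:non}.

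Next I would perform the consolidation step. Suppose two distinct resources $i$ and $i'$ both receive a positive number of tasks in the current schedule. Without loss of generality assume $\Mar{i}(\Map{i}+1) \le \Mar{i'}(\Map{i'}+1)$ (otherwise swap their roles), and apply Lemma~\ref{lemma:non} with $f=\Mar{i}$, $g=\Mar{i'}$, $i_f=i_g=1$, $s_f=\Map{i}$, and $s_g=\Map{i'}$. The lemma yields
\begin{equation*}
\sum_{j=1}^{\Map{i}}\Mar{i}(j) + \sum_{j=1}^{\Map{i'}}\Mar{i'}(j) \ge \sum_{j=1}^{\Map{i}+\Map{i'}}\Mar{i}(j) = \Cost{i}(\Map{i}+\Map{i'}),
\end{equation*}
so moving all of $i'$'s tasks onto $i$ does not increase the total cost. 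This move is feasible precisely because this scenario assumes every upper limit satisfies $\Upper{i}\ge\NumTasks$ and $\Map{i}+\Map{i'}\le\NumTasks$, so the merged resource can legally hold the combined load.

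Finally I would iterate this consolidation: each application reduces the number of resources holding tasks by one while never increasing the total cost, so after at most $\NumRes-1$ steps all $\NumTasks$ tasks sit on a single resource. Hence there is always an optimal schedule of the form ``all tasks on one resource,'' whose cost is $\Cost{i}(\NumTasks)$ for some $i$; the best such choice is $\min_{i\in\SetRes}\Cost{i}(\NumTasks)$, attained by the resource $k=\argmin_{i\in\SetRes}\Cost{i}(\NumTasks)$ selected on lines~4--5 of Algorithm~\ref{algo:mardecun}. Since MarDecUn returns exactly this schedule, it is optimal. The main obstacle I anticipate is purely in the bookkeeping around Lemma~\ref{lemma:non}: making sure the marginal-cost restriction fed into it is genuinely decreasing from index~$1$ onward (the artificial value $\Mar{i}(0)=0$ is harmless because all sums start at $j=1$) and that the extended index range $[1,\Map{i}+\Map{i'}]$ stays within each function's domain, both of which follow from~\eqref{eq:mardec} and the upper-limit assumption.
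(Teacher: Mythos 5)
Your proposal is correct and follows essentially the same route as the paper's proof: both decompose costs into sums of decreasing marginal costs via~\eqref{eq:mar2} and invoke Lemma~\ref{lemma:non} to rule out splitting tasks across resources, leaving the single cheapest resource as optimal. You merely make explicit the iterated pairwise-consolidation step that the paper leaves implicit.
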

\begin{proof}
    MarDecUn assigns all $\NumTasks$ tasks to a resource with minimal cost, so no other assignment to a single resource could improve the solution.
    By the definition of marginal costs in~\eqref{eq:mar}, we can rewrite the cost of a schedule for one resource as a sum of marginal costs, as shown in~\eqref{eq:mar2}.

\begin{equation}
\label{eq:mar2}
\Cost{i}(\NumTasks) = \Cost{i}(0) + \sum\limits_{t=1}^{\NumTasks} \Mar{i}(t),~~\forall i \in \SetRes
\end{equation}

    Given that all marginal cost functions are decreasing, Lemma~\ref{lemma:non} tells us that no solution splitting the $\NumTasks$ tasks among multiple resources can provide a smaller total cost, therefore the schedule computed by MarDecUn is minimal and optimal.
\end{proof}

\subsection{Decreasing Marginal Costs with Upper Limits (MarDec)}\label{subsec:optdec}

Although the previous solution for the scenario of decreasing marginal costs and no upper limits cannot be applied in the presence of upper limits, it provides us with an important insight.
Namely, Lemma~\ref{lemma:non} tells us that it is always more beneficial to put all tasks in the same resource, so an optimal solution can be found in one of two scenarios: 
(I) all tasks are assigned to a resource without upper limits; or 
(II) all tasks are assigned only to resources at maximum capacity and at most one resource at intermediary capacity.
Both scenarios are covered by MarDec in Algorithm~\ref{algo:mardec}.

MarDec starts by splitting the resources into two subsets: one for the resources that have upper limits ($\SetRes^{lim}$) and one for those who do not ($\SetRes^{unl}$) (lines 1--2).

In order to compute possible solutions for scenario~(II), MarDec employs a dynamic programming solution for the Minimum-Cost Maximal Knapsack Packing~(MCMKP) problem~\cite{furini2017}, as it helps us find which resources have to be at maximum capacity.
As MCMKP is a specialization of \theMC, Algorithm~\ref{algo:mardec} uses a variation of Algorithm~\ref{algo:1} that we call \theMC-matrices.
It outputs the support matrices $K$ and $I$ to enable the reuse of its partial solutions.
As previously described in Section~\ref{subsec:dp}, $K$ stores the minimal costs that were progressively computed, while $I$ stores the items that are part of the partial solutions.
MarDec also employs Algorithm~\ref{algo:prep} to convert its variables for use in a knapsack problem, and Algorithm~\ref{algo:trans} to translate a partial MCMKP solution to a schedule.

MarDec covers scenario~(II) in two steps.
At first, it computes all possible minimal solutions where a resource without upper limits is set at intermediary capacity (lines~6--15).
For each possible intermediary capacity (line~8), it finds the resource with minimal cost to receive the remaining tasks.
It is important to emphasize here that, if no solution is found for a specific knapsack capacity, \theMC{} provides an infinite cost, so no invalid solutions are ever considered.
Additionally, when $t=\NumTasks$, the solution for scenario~(I) is computed (i.e., the whole workload goes to a single resource as in MarDecUn).
At its second step, MarDec verifies all possible minimal solutions where one of the resources with upper limits ends up at intermediary capacity (lines 17--28).
In this case, MarDec removes the resource of interest from the input of \theMC~(line 18) and then computes all possible optimal schedules.
Throughout all these steps, the schedule $\Mapping$ that provides the minimal cost $\TotalCost$ is kept and provided at the end of the algorithm.


\begin{algorithm}[h]
    \caption{MarDec.}
    \label{algo:mardec}
    \begin{algorithmic}[1]
        \Require Set of resources $\SetRes$, number of tasks to schedule $\NumTasks$, set of upper limits $\SetUpper$, set of cost functions $\SetCosts$.
        \Ensure Optimal schedule $\Mapping$.
        \State $\SetRes^{lim} \leftarrow \{i\},~\forall i \in \SetRes, \Upper{i} < \NumTasks$
        \Comment{\textit{Resources w/ upper limits}}
        \State $\SetRes^{unl} \leftarrow \SetRes \setminus \SetRes^{lim}$
        \Comment{\textit{Resources without upper limits}}
        \State $\NumRes^{lim} \leftarrow |\SetRes^{lim}|$
        \State $\TotalCost \leftarrow \infty$
        \Comment{\textit{No valid solution so far.}}
        \Statex $\triangleright$~\textit{Resource from } $\SetRes^{unl}$ \textit{at intermediary capacity.}
        \If{$\SetRes^{unl} \neq \emptyset$}        

        \State $(\mathcal{N}, c, w, \gamma) \leftarrow$ Prepare$(\SetRes^{lim}, \SetUpper, \SetCosts)$
        \Comment{\textit{Algorithm~\ref{algo:prep}}.}
        \State $(K,I)\leftarrow$~\theMC-matrices$(\mathcal{N},c,w,\NumTasks)$
        \For{$t=0,\dots,\NumTasks$}
        \Comment{\textit{Evaluates all partial solutions.}}
            \State $k \leftarrow \argmin_{i \in \SetRes^{unl}} \Cost{i}(t)$
            \If{$\Cost{k}(t) + K[\NumRes^{lim}][\NumTasks-t] < \TotalCost$}
                \State $\TotalCost \leftarrow\Cost{k}(t) + K[\NumRes^{lim}][\NumTasks-t]$
                \Comment{\textit{New minimal.}}
                \State $\Mapping \leftarrow$ Translate$(\gamma,\SetRes,\mathcal{N},w,I,t)$
                \Comment{\textit{Algorithm~\ref{algo:trans}.}}
                \State $\Map{k} \leftarrow t$
            \EndIf
        \EndFor
        \EndIf
        \Statex $\triangleright$~\textit{Resource from } $\SetRes^{lim}$ \textit{at intermediary capacity.}
        \For{$i=1,\dots,\NumRes^{lim}$}
            \State $\mathcal{N}' \leftarrow (\mathcal{N}\setminus N_i)\cup \{N_i=\{0\}\}$
            \State $(K,I)\leftarrow$~\theMC-matrices$(\mathcal{N}',c,w,\NumTasks)$
            \State $k \leftarrow \gamma(i)$
            \Comment{\textit{Translates} $i$ \textit{to} $k$.}
            \For{$t=0,\dots,\Upper{k}-1$}
            \Comment{\textit{Checks all solutions with} $k$.}
            \If{$\Cost{k}(t) + K[\NumRes^{lim}][\NumTasks-t] < \TotalCost$}
                \State $\TotalCost \leftarrow\Cost{k}(t) + K[\NumRes^{lim}][\NumTasks-t]$
                \State $\Mapping \leftarrow$ Translate$(\gamma,\SetRes,\mathcal{N},w,I,t)$
                \State $\Map{k} \leftarrow t$
            \EndIf
            \EndFor
        \EndFor
        \State \textbf{return} $\Mapping$
    \end{algorithmic}
\end{algorithm}

\begin{algorithm}[h]
    \caption{Preparation for \theMC.}
    \label{algo:prep}
    \begin{algorithmic}[1]
        \Require Set of resources with upper limits $\SetRes^{lim}$ of size $\NumRes^{lim}$, set of upper limits $\SetUpper$, set of cost functions $\SetCosts$.
        \Ensure Set of disjoint classes of items $\mathcal{N} = \{N_1,\dots,N_{\NumRes^{lim}}\}$ with costs $c_{ij}$ and weights $w_{ij}$, $i=1,\dots,\NumRes^{lim}$, $j\in N_i$. Translation from disjoint classes to resources $\gamma$.
        \State $i \leftarrow 1$
        \ForAll{$r \in \SetRes^{lim}$}
            \State $\gamma(i) \leftarrow r$
            \State $N_i \leftarrow \{0, \Upper{r}\}$
            \Comment{\textit{Classes with 0 or $\Upper{r}$ tasks}.}
            \State $c_{i0} \leftarrow 0$~;~$c_{i\Upper{r}}\leftarrow \Cost{r}(\Upper{r})$
            \State $w_{i0} \leftarrow 0$~;~$w_{i\Upper{r}}\leftarrow \Upper{r}$
            \State $i \leftarrow i+1$
        \EndFor
        \State \textbf{return} $(\mathcal{N}, c, w, \gamma)$
    \end{algorithmic}
\end{algorithm}

\begin{algorithm}[h]
    \caption{Translation from \theMC~to a schedule.}
    \label{algo:trans}
    \begin{algorithmic}[1]
        \Require Translation from disjoint classes to resources $\gamma$.
        Set of resources $\SetRes$.
        Set of disjoint classes of items $\mathcal{N} = \{N_1,\dots,N_{\NumRes^{lim}}\}$ with weights $w_{ij}$, $i=1,\dots,\NumRes^{lim}$, $j\in N_i$. 
        Support matrix $I$ of dimensions $\NumRes^{lim}\times\NumTasks$.
        Knapsack capacity of interest $\NumTasks'$.
        \Ensure Partial schedule $\Mapping$.
        \State $\Map{i} \leftarrow 0,~\forall i \in \SetRes$
        \State $t \leftarrow \NumTasks'$
        \For{$i=\NumRes^{lim},\dots,1$}
            \Comment{\textit{Finds} $\Mapping$.}
            \State $j \leftarrow I[i][t]$~;~ $t \leftarrow t - w_{ij}$
            \State $\Map{\gamma(i)} \leftarrow j$
        \EndFor
        \State \textbf{return} $\Mapping$
    \end{algorithmic}
\end{algorithm}

Algorithm~\ref{algo:mardec} has a space bound in $O(\NumTasks\NumRes)$ due to its use of support matrices $K$ and $I$.
It requires~$O(\NumTasks\NumRes^2)$ operations. 
This number comes from the utilization of \theMC~in line~19.
As defined in Section~\ref{subsec:dp}, \theMC~requires $O(\NumTasks\sum_{i=1}^\NumRes |N_i|)$ operations.
In our case, each set of items $N$ contains only two items (i.e., scheduling zero or $\Upper{i}$ tasks to resource~$i$), so its complexity is in $O(\NumTasks\NumRes)$.
Given that \theMC~is computed at most $\NumRes+1$ times, the aforementioned complexity is achieved.

\subsubsection{Proof of optimality}

MarDec's optimality is demonstrated directly in Theorem~\ref{proof:mardec} based on the previous proofs for MarDecUn and \theMC.

\begin{theorem}\label{proof:mardec}
    The schedule $\Mapping$ computed by MarDec is optimal.
\end{theorem}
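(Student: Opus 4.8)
The plan is to prove optimality in two stages: first a structural characterization of optimal schedules, and then a completeness argument showing that MarDec enumerates every schedule matching that structure. Throughout I rely on the lower-limit removal of Section~\ref{subsec:low}, so that $\Cost{i}(0)=0$ and, by~\eqref{eq:mar2}, the cost of giving $\Map{i}$ tasks to resource $i$ equals $\sum_{t=1}^{\Map{i}}\Mar{i}(t)$, a partial sum of a decreasing sequence.

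First I would establish that some optimal schedule places at most one resource at an \emph{intermediary} load (strictly between $0$ and its upper limit), with every other resource either empty or saturated at its upper limit --- exactly scenarios~(I) and~(II). Suppose an optimal $\Mapping$ had two resources $i,j$ both at intermediary load. Viewing $\Mar{i}$ and $\Mar{j}$ as the monotonically decreasing functions $f,g$ of Lemma~\ref{lemma:non}, the combined contribution $\sum_{t=1}^{\Map{i}}\Mar{i}(t)+\sum_{t=1}^{\Map{j}}\Mar{j}(t)$ does not increase if tasks are shifted from the resource whose next marginal cost is larger onto the other one. Moving tasks in this direction until the receiving resource hits its upper limit or the donating resource empties turns at least one of the two into a non-intermediary resource without raising $\TotalCost$ and without violating any bound. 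Iterating this consolidation drives the number of intermediary resources down to one (or zero), which yields the claimed structure. This consolidation step --- in particular checking that Lemma~\ref{lemma:non} applies to the truncated intervals imposed by the upper limits, and that the process terminates --- is the part I expect to be the main obstacle.

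With the structure in hand, I would argue that MarDec evaluates every candidate of this form. A resource saturated at its upper limit must belong to $\SetRes^{lim}$, since a resource in $\SetRes^{unl}$ has $\Upper{i}\ge\NumTasks$ and could only be ``full'' by absorbing all $\NumTasks$ tasks, which is scenario~(I). I would then split on the location of the unique intermediary resource. If it lies in $\SetRes^{unl}$, the first loop (lines~8--15) iterates its load $t$ over $[0,\NumTasks]$, selects the cheapest unlimited resource for those $t$ tasks, and combines it with $K[\NumRes^{lim}][\NumTasks-t]$, the minimum cost of an exact packing of the remaining $\NumTasks-t$ tasks by saturated limited resources; scenario~(I) is recovered at $t=\NumTasks$. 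If the intermediary resource lies in $\SetRes^{lim}$, the second loop (lines~17--28) removes that resource from the knapsack instance, recomputes the \theMC{} matrices over the remaining limited resources, and sweeps its load $t$ over $[0,\Upper{k}-1]$. By Theorem~\ref{proof:1:3}, each $K[\NumRes^{lim}][\NumTasks-t]$ entry is the optimal minimum-cost maximal packing, so every evaluated combination is costed correctly.

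Finally, I would combine the two stages: the optimal schedule matches scenario~(I) or~(II) by the structural argument, MarDec costs at least one representative of that configuration by the completeness argument, and since MarDec retains the minimum-cost candidate over all of them, the schedule it returns has cost no greater than the optimum and is therefore optimal. The remaining loose ends are the degenerate cases --- an empty $\SetRes^{unl}$, or a configuration with no intermediary resource at all --- which are absorbed by the $t=0$ iterations and by the standing assumption $\NumTasks<\sum_{i}\Upper{i}$ guaranteeing at least one non-saturated resource.
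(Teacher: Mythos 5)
Your proposal is correct and follows essentially the same route as the paper's proof: a structural claim (at most one resource at intermediary load, justified via Lemma~\ref{lemma:non}) followed by an exhaustion argument that MarDec enumerates every configuration of that form, with the partial packings costed optimally by Theorem~\ref{proof:1:3}. The only difference is one of detail --- you spell out the pairwise consolidation/exchange argument (including the truncation at upper limits and termination) that the paper compresses into the single assertion that two intermediary resources would contradict Lemma~\ref{lemma:non}.
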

\begin{proof}
In the presence of decreasing marginal costs, Lemma~\ref{lemma:non} defines that an optimal schedule can be found in one of two scenarios: all remaining tasks are assigned to the same resource with minimum cost, or all tasks are assigned only to resources at maximum capacity and at most one resource at intermediary capacity (i.e., having two or more resources at intermediary capacity contradicts Lemma~\ref{lemma:non}).

MarDec computes a solution to the first scenario exactly as MarDecUn does. MarDecUn's solution is proved optimal in Theorem~\ref{proof:mardecun}.
For the second scenario, MarDec employs \theMC{} to compute all possible minimum-cost partial schedules using resources at maximum capacity.
These partial solutions are proved optimal in Theorem~\ref{proof:1:3}. 
All possible partial schedules are combined to all possible assignments of the remaining tasks to other resources with minimum cost.
By exhaustion, MarDec keeps the minimum-cost solution among every single possible solution that could provide a minimum cost in our two scenarios, therefore its schedule is optimal.
\end{proof}

\section{Concluding Remarks}\label{sec:conclusion}

\begin{table*}[!ht]
\centering
    \caption{Solutions with the smallest complexity for the variations of our scheduling problem. We assume $\NumTasks>\NumRes$.}
    \label{tab:dis}
    \begin{tabular}{c|c|ccc}

                     & \multirow{2}{*}{Arbitrary Costs} & \multicolumn{3}{c}{Marginal Costs}                     \\
                     & & Increasing & Constant & Decreasing \\
\toprule
Without upper limits & \theMC~--~$O(\NumTasks^2\NumRes)$ & MarIn~--~$\Theta(\NumTasks \log \NumRes)$ & MarDecUn~--~$\Theta(\NumRes)$ & MarDecUn~--~$\Theta(\NumRes)$ \\
With upper limits    & \theMC~--~$O(\NumTasks^2\NumRes)$ & MarIn~--~$\Theta(\NumTasks \log \NumRes)$ & MarCo~--~$\Theta(\NumRes \log \NumRes)$ & MarDec~--~$O(\NumTasks\NumRes^2)$
\end{tabular}
\end{table*}

In this paper, we considered the problem of minimizing the energy consumption of Federated Learning training on heterogeneous devices by controlling their workload distribution.
This problem is of growing interest, given the environmental costs of machine learning~\cite{schwartz2020,henderson2020} and FL systems~\cite{qiu2021}.
We have modeled this as the Minimal Cost FL Schedule problem, a total cost minimization problem with identical, independent, and atomic tasks that have to be assigned to heterogeneous resources with arbitrary cost functions.
In this process, we have defined a previously unexplored knapsack problem named Multiple-Choice Minimum-Cost Maximal Knapsack Packing Problem that generalizes our scheduling problem.
We have proposed an optimal solution for this knapsack problem based on dynamic programming and proved its optimality, thus solving the Minimal Cost FL Schedule problem too.

We have also explored scenarios with monotonically increasing cost functions with specific behaviors, and situations with and without upper limits.
Based on all optimal algorithms that we have proposed, Table~\ref{tab:dis} presents the algorithms that provide the lowest complexity for each scenario.
In all scenarios, but especially in scenarios with constant and decreasing marginal costs, the solution that minimizes the energy consumption may require that few resources to do most of the training.
In order to prevent over-representation from more energy-efficient devices~\cite{lim2020}, we recommend paying attention to the upper and lower limits set for all resources.

We would also like to emphasize that these algorithms are not only useful for energy conservation in FL systems: (I) they can also be used to minimize other kinds of costs (e.g., emissions of carbon dioxide or equivalents, financial costs), requiring only the cost estimates for different workload assignments; and (II) they can be applied to other problems that work with one-dimensional data partition~\cite{khaleghzadeh2018,khaleghzadeh2021}.

For the foreseeable future, we envision studies related to the application and adaptation of our algorithms.
First, we would like to conduct experiments in FL platforms to evaluate the impact of our algorithms compared to other solutions.
This impact should be measured in energy consumption, execution time, and accuracy of the model (or convergence speed).
Second, in terms of adaptation, new solutions may be required to handle dynamic changes in the system (e.g., changes in the cost behavior or loss of a device), to optimize the energy consumption of asynchronous FL systems, and to optimize FL systems that can offload parts of their computations to other Edge devices.

\section*{Acknowledgments}
The author would like to thank Dr. Amina Guermouche and Dr. Mihail Popov for their feedback on earlier versions of this manuscript.

\bibliographystyle{IEEEtran}
\bibliography{llpilla}

\end{document}